\newcommand{\rfig}[1]{\autoref{fig:#1}}
\newcommand{\ralg}[1]{\autoref{alg:#1}}
\newcommand{\rthm}[1]{\autoref{thm:#1}}
\newcommand{\reqn}[1]{\autoref{eqn:#1}}
\newcommand{\rtbl}[1]{\autoref{tbl:#1}}
\newcommand{\rsec}[1]{\cref{subsec:#1}}
\newcommand{\Rsec}[1]{\Cref{subsec:#1}}
\algnewcommand\Null{\textsc{null }}
\algnewcommand\algorithmicinput{\textbf{Input:}}
\algnewcommand\Input{\item[\algorithmicinput]}
\algnewcommand\algorithmicoutput{\textbf{Output:}}
\algnewcommand\Output{\item[\algorithmicoutput]}
\algnewcommand\algorithmicbreak{\textbf{break}}
\algnewcommand\Break{\algorithmicbreak}
\algnewcommand\algorithmiccontinue{\textbf{continue}}
\algnewcommand\Continue{\algorithmiccontinue}
\algnewcommand{\LeftCom}[1]{\State $\triangleright$ #1}
\algnewcommand{\OR}{\textbf{or}~}
\algnewcommand{\AND}{\textbf{and}~}
\newtheorem{thm}{Theorem}
\newtheorem{lem}{Definition}
\colorlet{shadecolor}{black!15}
\theoremstyle{definition}
\def\thmautorefname~#1\null{Theorem~#1~\null}
\def\lemautorefname~#1\null{Define~#1~\null}
\def\algorithmautorefname~#1\null{Algorithm~#1~\null}
\begin{document}


\title{FIMP-HGA: A Novel Approach to Addressing the Partitioning Min-Max Weighted Matching Problem}

\author{Yuxuan Wang, 
Jiongzhi Zheng, 
Jinyao Xie, 
Kun He$^{*}$, ~\IEEEmembership{Senior~Member,~IEEE}
\thanks{This work was supported by the National Natural Science Foundation (U22B2017).}
}

\maketitle

\begin{abstract}
The Partitioning Min-Max Weighted Matching (PMMWM) problem, being a practical NP-hard problem, integrates the task of partitioning the vertices of a bipartite graph into disjoint sets of limited size with the classical Maximum-Weight Perfect Matching (MPWM) problem. 
Initially introduced in 2015, the state-of-the-art method for addressing PMMWM is the MP$_{\text{LS}}$. 
In this paper, we present a novel approach, the Fast Iterative Match-Partition Hybrid Genetic Algorithm (FIMP-HGA), for addressing PMMWM. Similar to MP$_{\text{LS}}$, FIMP-HGA divides the solving into match and partition stages, iteratively refining the solution. 
In the match stage, we propose the KM-M algorithm, which reduces matching complexity through incremental adjustments, significantly enhancing runtime efficiency. 
For the partition stage, we introduce a Hybrid Genetic Algorithm (HGA) incorporating an elite strategy and design a Greedy Partition Crossover (GPX) operator alongside a Multilevel Local Search (MLS) to optimize individuals in the population. Population initialization employs various methods, including the multi-way Karmarkar-Karp (KK) algorithm, ensuring both quality and diversity. 
At each iteration, the bipartite graph is adjusted based on the current solution, aiming for continuous improvement. 
To conduct comprehensive experiments, we develop a new instance generation method compatible with existing approaches, resulting in four benchmark groups. 
Extensive experiments evaluate various algorithm modules, accurately assessing each module's impact on improvement. Evaluation results on our benchmarks demonstrate that the proposed FIMP-HGA significantly enhances solution quality compared to MP$_{\text{LS}}$, meanwhile reducing runtime by 3 to 20 times.
\end{abstract}

\begin{IEEEkeywords}
Combinatorial optimization, 
KM algorithm, hybrid genetic algorithm, elite strategy.
\end{IEEEkeywords}

\section{Introduction}
\label{subsec:intro}
\IEEEPARstart{I}{n} this paper, we consider 
the Partitioning Min-Max Weighted Matching (PMMWM) problem that is NP-hard in the strong sense~\cite{a1}. 
The PMMWM revolves around a weighted bipartite graph $G(U, V, E)$ comprising two disjoint vertex sets $U$, $V$ and an edge set $E = \{e_{uv} | u \in U, v \in V\}$, where the weight of each edge $e_{uv}$ is denoted by $w(e_{uv})$. Set $U$ is required to be partitioned into $m$ disjoint partitions, each partition is restricted to a maximum of $\bar{u}$ vertices.
For a maximum matching on $G$, the weight of a partition is determined by the total sum of the weights of edges that match vertices within that partition.
The objective of the problem is to find a matching and a partition that together minimize the weight of the heaviest partition. 

The PMMWM problem serves as an extension of the Min-Max Weighted Matching (MMWM) problem \citep{a2}, which originates from the container transshipment operations in rail-road terminals. While MMWM shares the same objective function as PMMWM, it operates with predetermined vertex partitions and focuses solely on determining the matching result.
Though being NP-hard in the strong sense, 
MMWM has found effective heuristics that can find matching results 
with an average optimality gap of less than 1\% \cite{a2}.
However, in some practical applications, the partition is not predetermined but can be adjusted to further optimize the objective. 
\citet{a1} introduce the PMMWM problem with its representative application at small to medium-sized seaports containing long-term and temporary storage areas.
The challenge in this context lies in container relocation, where each move incurs a specific cost denoted by edge weights. 
Additionally, 
we find another practical application of PMMWM that lies in optimizing task allocation in workshops. 
Each task should be performed on a machine by a worker, with each worker undertaking only one task at a time, and each machine can handle at most one task simultaneously. The goal is to determine a matching between workers and tasks, and partition workers among machines, to minimize the longest 
running time of the machines, i.e., minimize the makespan.


\begin{figure}[t]
\centering
\begin{subfigure}{0.24\textwidth}
  \includegraphics[width=\linewidth]{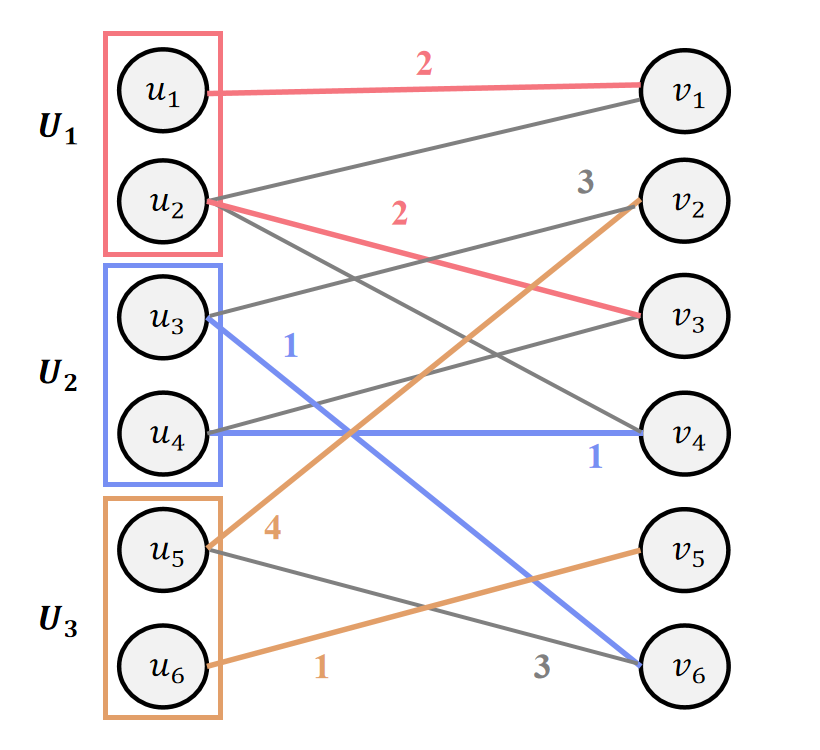}
  \caption{}
  \label{fig:example1}
\end{subfigure}\hfil
\begin{subfigure}{0.24\textwidth}
  \includegraphics[width=\linewidth]{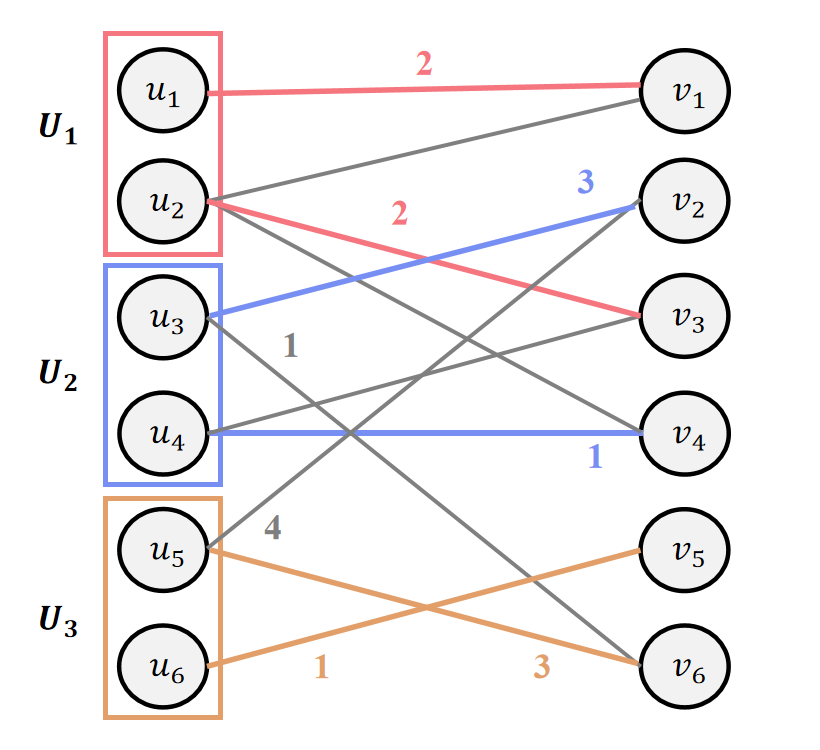}
  \caption{}
  \label{fig:example2}
\end{subfigure}
\caption{(a) A feasible PMMWM solution. (b) Improve the solution quality from (a) by modifying the matching scheme.}
\label{fig:pmmwm}
\end{figure}

To facilitate a deeper understanding of PMMWM and its components, we present an instance in \rfig{pmmwm}, where \rfig{example1} illustrates a feasible solution to the instance. Colored edges in the figure signify the maximum matching edges with associated weights marked alongside. The left part of the bipartite graph, $U=\{u_1,u_2,...,u_6\}$, is divided into three partitions, $U_1$, $U_2$, and $U_3$ distinguished by different colors. The weights of the three partitions are 4, 2, and 5, respectively, resulting in an objective value of 5. 
This solution quality can be improved through two distinct approaches. 
The first method involves adjusting the partition strategy. By relocating $u_6$ to $U_2$, the weights of the partitions transform to 4, 3, and 4, consequently yielding an objective value of 4. 
The second approach is to modify the matching strategy. For $u_3$ and $u_5$, replacing the current matching edges $e_{u_3 v_6}, e_{u_5 v_2}$ with edges $e_{u_3 v_2},e_{u_5 v_6}$, respectively, leads to the partition weights of 4, 4, 4, as shown in \rfig{example2}. Although this adjustment increases the total weight of matching edges, it effectively reduces the final objective value to 4.

From the above example, we can observe that the PMMWM problem can be decomposed into two stages: match and partition. Adjustments made to either of these stages hold the potential to optimize the solution quality. 
The match stage requires solving the minimum weight perfect matching problem, which can be reformulated into the classic Maximum-Weight Perfect Matching (MWPM) problem \cite{MWPM, AP} by inverting the weight values $w'(e_{uv}) = -w(e_{uv})$. There are exact polynomial-time algorithm, such as the KM algorithm \cite{KM} (also known as the Hungarian method), can find the optimal solution to the MWPM problem. The partition stage requires distributing vertices into capacity-constrained partitions, aiming to minimize the weight of the heaviest partition.
This problem has been addressed in various application scenarios, such as the Multiprocessor Scheduling problem \cite{RP} and the Parallel Machine Scheduling problem \cite{PMS}. Additionally, the Multi-way Number Partition (MNP) problem \cite{NPAll-1, NPAll-2,NPAll-3} is another similar problem aiming to minimize the range of partition weights. 

Algorithms for solving these partition problems typically fall into three main categories: 
exact algorithms, approximate algorithms, and heuristic algorithms. Exact algorithms \cite{SNP, MOF,BSBCP,CIW} can achieve optimal solutions but are limited to small instances because of their high computational complexity. 
Approximate algorithms \cite{GA1,KK}, on the other hand, can quickly produce solutions with guaranteed quality but may exhibit a notable gap from the optimal solution. As a good trade-off between the above two categories, heuristic algorithms \cite{DIMM,ICSA} offer satisfactory solutions within reasonable runtime. 
Considering the scale of PMMWM problem instances, exact algorithms may struggle to cope,  while the solution quality from approximate algorithms might not meet requirements. Consequently, heuristic algorithms are commonly employed to tackle the partition stage.

\citet{a1} introduce the MP$_{\text{LS}}$ algorithm for PMMWM, which employs the KM algorithm to the match stage and a local search method to the partition stage and iterates these two stages to optimize the solution quality. Once starting a new iteration, MP$_{\text{LS}}$ adjusts the bipartite graph slightly and calls the KM algorithm again to find a maximum matching. 
Instead, \citet{7376822} employ an integrated approach with tabu search  \citep{DBLP:journals/informs/Glover89, DBLP:journals/informs/Glover90} and genetic algorithm \citep{DBLP:books/daglib/0019083} to directly solve the problem. 
Results indicate that the decomposition approach of MP$_{\text{LS}}$ can explore the solution space more efficiently than the integrated approach~\cite{7376822}. However, MP$_{\text{LS}}$ still has room for improvement in both search efficiency during the match stage and search capability during the partition stage.

In this paper, we adopt the decomposition Match-Partition framework and propose a Fast Iterative Match-Partition Hybrid Genetic Algorithm (FIMP-HGA) with elite strategy to solve the PMMWM problem and address the limitation of MP$_{\text{LS}}$. 
FIMP-HGA operates as an iterative heuristic algorithm, with each iteration comprising three distinct stages.
In the first stage (match stage), we employ an exact algorithm to solve the MWPM problem. While the KM algorithm utilized in MP$_{\text{LS}}$ guarantees optimal solutions, 
its efficiency in runtime leaves much to be desired.
Recognizing that only minor adjustments are made to the graph in each iteration, 
we capitalize on the reuse of substantial matching information from preceding iterations to mitigate redundant operations. 
Motivated by this insight, we propose the KM-M algorithm, which accelerates the complexity of the match stage from $O(n^3)$ to $O(n^2)$ while preserving the optimality of the matching.

Consequently, we propose a Hybrid Genetic Algorithm (HGA) with elite strategy for the second stage (partition stage). Unlike the local search utilized in MP$_{\text{LS}}$, the genetic algorithm (GA) \cite{Genetic1,Genetic2,Genetic3}, as a type of evolutionary algorithms~\cite{Evolution1,Evolution12,Evolution13}, offers better global search capabilities, along with enhanced robustness and flexibility. HGA hybrids the genetic algorithm and local search method, performing both global and local search schemes, enabling the algorithm to explore the solution space widely and deeply.
Specifically, HGA leverages approximate algorithms, greedy algorithm~\cite{GA1}, and the multi-way Karmarkar-Karp (KK) algorithm \cite{KK}, to generate the initial population, and generates new populations by our proposed Greedy Partition Crossover (GPX) operator. Each solution obtained through initialization and crossover will be further improved by our proposed Multilevel Local Search (MLS) algorithm. To accelerate the evolution of the population, we introduce the elite strategy \cite{GeneticElite1,GeneticElite2,GeneticElite3}, ensuring that in each crossover, the optimal individual always survives to the subsequent generation.

In the final stage, FIMP-HGA bans an edge in 
the graph for a certain number of iterations to adjust the bipartite matching solution in subsequent iterations, which can help the algorithm explore a broader solution space.
However, if the difference between the solution obtained in the current iteration and the optimal solution reaches a certain threshold, all currently banned edges will be released with a certain probability. This approach is named the edge recovery strategy. When an edge is released, it is also necessary to use the KM-M algorithm to recalculate the matching scheme for the related vertices.

Furthermore, we categorize the instances based on the consistency and density of the bipartite graph, thereby generating new benchmarks compatible with prior studies. We then conduct extensive experiments using FIMP-HGA and analyze the characteristics of the PMMWM problem based on these new benchmarks.

The remainder of the paper is organized as follows.
\Rsec{prob} introduces a detailed formulation of the PMMWM problem. Subsequently, \Rsec{p} presents the preliminaries, including an overview of the KM algorithm and the KK algorithm. \Rsec{alg} presents details of the FIMP-HGA. The process of instance generation and the corresponding experimental results are detailed in \Rsec{exp}. Finally, \Rsec{con} summarizes the key findings and conclusions drawn in this study.

\section{Problem Formulation}
\label{subsec:prob}
Let $G(U,V,E)$ be a weighted bipartite graph, where $U$ and $V$ are two disjoint vertex sets, and $E = \{e_{uv}|u \in U, v \in V\}$ denotes the set of edges connecting vertices in $U$ and $V$. 
Assume $|U| = n_1$, $|V| = n_2$, and $n_1 \leq n_2$. Each edge $e_{uv} \in E$ is associated with a weight $w(e_{uv})\in\mathbb{Q}^{+}_{0}$. Define a matching in $G$ as the set $M\subseteq E$ of nonadjacent edges, and the maximum matching~\citep{DBLP:journals/computing/DerigsZ78} as the matching with the largest $|M|$ among all matchings in $G$. 


Assume that for any given weighted bipartite graph, there exists at least one maximum matching $\Pi$ such that $|\Pi|=n_{1}$. Define a partition $\mathcal{P}$ of $U$ that divides $U$ into $m$ disjoint partitions $U_1,U_2,...,U_m$, with at most $\bar{u}$ vertices in each partition. 
For a given matching $\Pi$, define the weight of partition $U_i$ as $W(U_i) = \sum_{u \in U_i, e_{u,v} \in \Pi} w(e_{uv})$. 
The goal of the PMMWM is to find a partition $\mathcal{P}$ of $U$ and a maximum matching $\Pi$ of $G$ that minimizes the objective function $f(\Pi, \mathcal{P}) := \max_{k\in\{1, ...,m\}}W(U_k)$.



Let $z_{uvk}$ be a binary variable such that $z_{uvk} = 1$ if $u \in U_k, e_{uv} \in \Pi$, and $z_{uvk} = 0$ otherwise. 
The PMMWM problem can be formalized as follows. 

\begin{equation*}
\begin{aligned}
    \label{eq:pmmwm}
    & \min_{\mathbf{z}} \max_{k\in\{1,…,m\}} \{\sum_{u\in U}\sum_{v\in V}w(e_{uv})z_{uvk}\} \\
\text{s.t.} \qquad & (1) \qquad\sum_{k=1}^{m}\sum_{v\in V}z_{uvk}=1\quad\forall u\in U,\\
    &(2) \qquad \sum_{k=1}^{m}\sum_{u\in U}z_{uvk}\le1\quad\forall v \in V,\\
    &(3) \qquad \sum_{k=1}^{m}z_{uvk}=1\quad\forall e_{uv} \in E,\\
    &(4) \qquad \sum_{u \in U}\sum_{v\in V}z_{uvk}\le\bar{u}\quad\forall k\in\{1,…,m\}.
\end{aligned}
\end{equation*}

The objective function aims to find the minimum weight of the maximum partition weight among all possible maximum matchings and valid partitioning schemes.
Constraints (1) and (2) are the well-known constraints on the maximum matching. Constraint (3) forces each vertex $u\in U$ to belong to exactly one partition $U_k,k\in \{1,...,m\}$. Constraint (4) restricts that the size of the partition does not exceed $\bar{u}$. 

\section{Preliminaries}
\label{subsec:p}


In the match stage, our proposed the KM-M algorithm is based on the classical KM algorithm \cite{KM}.
In the partition stage, high-quality initial solutions are generated through the multi-way Karmarkar-Karp (KK) algorithm \cite{KK}. In the following, we will describe the KM and KK algorithms that are closely related to our FIMP-HGA.

\subsection{KM Algorithm}
The KM algorithm is used to solve the Maximum-Weight Perfect Matching (MWPM) problem~\cite{MWPM}, which aims to find a perfect matching with the maximum total weight of the matched edges. 
We first introduce some definitions and then describe the procedure of the KM algorithm.

\begin{lem}[Alternating Path]
    Given a bipartite graph $G = (U,V,E)$ with some edges matched. An alternating path on $G$ is a path that starts from an unmatched vertex and then traverses an unmatched edge and a matched edge alternatively.
\end{lem}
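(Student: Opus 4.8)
The final statement is a \emph{definition} rather than a proposition (the \texttt{lem} environment in this paper is declared as ``Definition''), so strictly speaking there is no proof obligation: it merely fixes the terminology ``alternating path'' on which the subsequent development---augmenting paths, the equivalence subgraph, and the \textsc{findpath} routine---will rely. Accordingly, the plan is not to derive the statement but to confirm that it is well posed and consistent with the matching framework already in play, which is the only meaningful verification a definition admits.

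First I would check that the named object exists and is non-degenerate under the stated hypotheses. Given a matching $M \subseteq E$ on the bipartite graph $G=(U,V,E)$ with some edges matched, an unmatched vertex (which exists whenever $M$ is not perfect) is a legitimate starting point, and the prescribed alternation ``unmatched edge, then matched edge, then unmatched edge, \dots'' is always realizable precisely because $M$ is a matching: each vertex is incident to at most one matched edge, so after traversing a matched edge the continuation is forced to be an unmatched edge, while after an unmatched edge the next step may be chosen among the remaining incident edges. This confirms that the definition describes a coherent, non-empty class of walks rather than an ambiguous or vacuous one.

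Second, I would verify internal consistency with the neighboring definition that depends on it. The very next definition (augmenting path) is introduced as a \emph{special} alternating path whose two endpoints are both unmatched; for that specialization to be meaningful, the alternating-path definition must admit endpoints of either matched or unmatched status, which the present wording does. Since the statement introduces only notation and asserts nothing beyond it, this consistency check constitutes the entire ``proof,'' and no further argument is needed. The one subtlety worth flagging is purely expository: ``path'' should be read in the simple, vertex-non-repeating sense, so that the later augmenting-path arguments---which toggle matched and unmatched edges along the path---cannot revisit a vertex and thereby corrupt the matching.
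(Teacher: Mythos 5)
You are correct: the paper's \texttt{lem} environment is declared via \verb|\newtheorem{lem}{Definition}|, so this statement is the paper's definition of an alternating path, and the paper itself supplies no proof for it. Your well-posedness checks (realizability of the alternation because a matching gives each vertex at most one matched edge, and consistency with the subsequent augmenting-path definition) are sound and are the right kind of verification for a definition, so there is nothing to correct.
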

	
\begin{lem}[Augmenting Path]
    Given a bipartite graph $G = (U,V,E)$ with some edges matched. An augmenting path in $G$ is a special alternating path on $G$ whose starting and ending vertices are both unmatched.
\end{lem}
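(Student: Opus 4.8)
The statement carries the \texttt{Augmenting Path} label inside the \texttt{lem} environment, which the preamble configures (via \verb|\newtheorem{lem}{Definition}|) to typeset as ``Definition''; consequently it is a \emph{definition} rather than a proposition, and strictly speaking there is nothing to prove. The plan, therefore, is not to derive the statement but to check that it is well posed and carries the structural content that the later matching routines rely on, namely that an augmenting path is a legitimate specialization of the preceding Alternating Path definition and that it does what its name promises.

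First I would confirm consistency with the preceding definition: an augmenting path is required to be an alternating path, so it already starts from an unmatched vertex and alternates unmatched and matched edges; the only additional clause is that the terminal vertex is also unmatched. Thus every augmenting path is an alternating path, and the class is nonempty precisely when the current matching $\Pi$ is not a maximum matching. I would then record the immediate structural consequence of the two endpoint conditions: because the path begins and ends with unmatched edges and strictly alternates, it must contain an odd number of edges, with exactly one more unmatched edge than matched edge, and in the bipartite graph $G=(U,V,E)$ its two endpoints lie on opposite sides of the bipartition.

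The only substantive fact worth spelling out is the \emph{augmenting property} that motivates the name. Let $P$ denote the edge set of the path and let $\Pi$ be the current matching. I would argue that the symmetric difference $\Pi \triangle P$ is again a matching with $|\Pi \triangle P| = |\Pi| + 1$. The verification is routine bookkeeping: every internal vertex of $P$ is incident to exactly one matched and one unmatched edge of $P$, so after toggling it retains a unique incident edge, while each endpoint, previously unmatched, gains exactly one incident edge; hence no vertex attains degree exceeding one in $\Pi \triangle P$, confirming it is a matching. The cardinality rises by one because $P$ contributes one more formerly-unmatched edge than formerly-matched edge, so swapping removes $t$ edges from $\Pi$ and inserts $t+1$.

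I do not anticipate a genuine obstacle, since the statement is definitional; the only care needed is the degree check in the symmetric-difference argument, ensuring that the two endpoints are truly unmatched so that toggling their edges does not create a vertex of degree two. This is exactly the role played by the ``starting and ending vertices are both unmatched'' clause, and it is what the \textit{findpath} and \textit{match} routines exploit when they reverse the edges of a discovered path to enlarge $\Pi$ in the equivalence subgraph.
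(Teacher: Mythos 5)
You correctly identify that this ``lem'' is in fact a definition (the paper's preamble renders \verb|lem| as ``Definition''), and the paper accordingly supplies no proof, so your treatment matches the paper's exactly: there is nothing to derive. Your supplementary check of the augmenting property via the symmetric difference $\Pi \triangle P$ is standard and correct, and it accurately reflects how the \textit{match}/\textit{findpath} routines later exploit the definition.
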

	
\begin{lem}[Feasible Label]
    A label of a bipartite graph $G = (U,V,E)$ can be represented by assigning a value $ex_i$ to each vertex $i \in U \cup V$. A feasible label is a label that satisfies $ex_u+ex_v \ge w'(e_{uv})$ for each edge $e_{uv} \in E$.
\end{lem}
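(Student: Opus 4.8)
The final statement is a \emph{definition} rather than a proposition: the environment is declared as a Definition, and it merely fixes the terminology \emph{feasible label} for a vertex potential $ex_i$ on $U \cup V$ satisfying $ex_u + ex_v \ge w'(e_{uv})$ for every edge $e_{uv} \in E$. As a definition it asserts nothing that must be discharged by proof. The plan is therefore to verify the two facts that make the definition meaningful rather than vacuous: that feasible labels always exist, and that they coincide exactly with the dual-feasible solutions of the matching linear program, which is the role they play throughout the KM algorithm.

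First I would exhibit a feasible label explicitly, recovering the canonical initialization used by the KM algorithm. Setting $ex_v = 0$ for every $v \in V$ and $ex_u = \max_{v \in V,\, e_{uv} \in E} w'(e_{uv})$ for every $u \in U$, each edge $e_{uv}$ satisfies $ex_u + ex_v = ex_u \ge w'(e_{uv})$ directly from the definition of the maximum. This checks the defining inequality edge by edge and establishes that the set of feasible labels is nonempty, so the notion is not empty.

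Second I would record why this is the correct notion, by aligning it with linear programming duality. Since a perfect matching is sought, the governing primal maximizes $\sum_{e_{uv} \in E} w'(e_{uv})\, x_{uv}$ subject to the equality degree constraints $\sum_{v} x_{uv} = 1$ and $\sum_{u} x_{uv} = 1$ with $x \ge 0$; its dual minimizes $\sum_{i \in U \cup V} ex_i$ subject to exactly $ex_u + ex_v \ge w'(e_{uv})$, with $ex$ free in sign. This is precisely the feasible-label condition, and it explains why no nonnegativity on $ex$ is imposed. The tightness condition $ex_u + ex_v = w'(e_{uv})$, which singles out the edges the KM algorithm retains, is then exactly the complementary-slackness condition that certifies optimality once those tight edges contain a perfect matching.

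There is no genuine obstacle, as each step is an immediate check; the subtleties worth flagging are two. First, the match stage operates on negated weights $w'(e_{uv}) = -w(e_{uv})$, so the initialization, the dual program, and the inequality must all be read against $w'$ rather than the original $w$, lest the direction of the constraint be confused with the minimization that PMMWM originally poses. Second, because the degree constraints are equalities in the perfect-matching formulation, the dual variables are unrestricted in sign; it is therefore worth confirming that the canonical initialization remains feasible even though the negated weights may be nonpositive, which the computation $ex_u + ex_v = \max_{v} w'(e_{uv}) \ge w'(e_{uv})$ already guarantees independently of sign.
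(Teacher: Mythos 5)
You are correct: the statement is a definition (the paper's \texttt{lem} environment is declared as ``Definition''), so the paper supplies no proof, and your recognition of this is exactly the right call. Your supplementary checks---nonemptiness via the initialization $ex_v = 0$, $ex_u = \max_{v \in V,\, e_{uv} \in E} w'(e_{uv})$, and the identification of feasible labels with dual-feasible solutions tied to complementary slackness---mirror precisely the paper's own KM initialization and its primal-dual discussion used later to justify KM-M, so your treatment is essentially the same as the paper's.
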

	
\begin{lem}[Equivalence Subgraph]
    An equivalence subgraph is a spanning subgraph of the original graph (a spanning subgraph contains all vertices of the original graph, but not all edges) that only contains edges satisfying $ex_u+ex_v=w'(e_{uv})$.
\end{lem}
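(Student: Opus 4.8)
The final statement is a \emph{definition} rather than an assertion: it introduces the equivalence subgraph attached to a feasible labeling as the spanning subgraph whose edge set is exactly the tight edges $\{e_{uv}\in E : ex_u+ex_v=w'(e_{uv})\}$. Strictly speaking there is no proposition to discharge here, so the ``proof'' reduces to a well-posedness check together with the one structural fact that makes the object worth naming. First I would verify that the construction is well-defined: given any feasible labeling in the sense of the Feasible Label definition, the relation $ex_u+ex_v=w'(e_{uv})$ splits $E$ into tight and slack edges, so the tight set is uniquely determined by the labeling; retaining all vertices of $U\cup V$ together with only these tight edges yields a spanning subgraph, exactly matching the parenthetical description. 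Feasibility guarantees $ex_u+ex_v\ge w'(e_{uv})$ everywhere, so ``tight'' coincides with ``the labeling constraint holding with equality,'' which is the sense in which the subgraph is an \emph{equivalence} subgraph of $G$.

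The content that actually justifies naming this object, and on which the correctness of KM (and hence KM-M) rests, is the optimality characterization: a perfect matching contained in the equivalence subgraph is a maximum-weight perfect matching. I would establish this by the weak-duality argument implicit in the primal--dual view. For any perfect matching $M$ (one saturating every $u\in U$, which exists by the standing assumption that a maximum matching of size $n_1$ exists), summing the feasible-label inequality over the edges of $M$ gives
\[
\sum_{e_{uv}\in M} w'(e_{uv}) \;\le\; \sum_{e_{uv}\in M}(ex_u+ex_v),
\]
and the right-hand side equals $\sum_{u\in U}ex_u$ plus the labels of the $U$-many $V$-vertices that $M$ covers, hence is determined by the labeling and serves as a uniform upper bound on the weight of \emph{every} perfect matching.

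The tightness half is the crux. The displayed inequality becomes an equality precisely when every edge of $M$ satisfies $ex_u+ex_v=w'(e_{uv})$, i.e. precisely when $M$ lies inside the equivalence subgraph; a perfect matching drawn entirely from the equivalence subgraph therefore attains the label upper bound and is maximum-weight, while simultaneously certifying that the labeling has minimum total value. The main obstacle is bookkeeping rather than depth: because $|U|\le|V|$ a ``perfect'' matching saturates only the $U$ side, so I would need to account for the uncovered $V$-vertices, either by restricting the label sum to covered vertices or, more cleanly, by padding $U$ with dummy vertices joined by zero-weight edges so that a genuine perfect matching exists. I would also confirm that the sign convention $w'(e_{uv})=-w(e_{uv})$ correctly converts this maximum-weight statement back into the minimum-weight maximum matching required by the match stage.
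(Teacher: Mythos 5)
This statement is a definition rather than a theorem (the paper's \texttt{lem} environment is declared with \texttt{\textbackslash newtheorem} as ``Definition''), so the paper supplies no proof of it, and you correctly identify this: your well-posedness check---that a feasible labeling partitions $E$ into tight and slack edges, so the tight edges together with all of $U \cup V$ determine a unique spanning subgraph---is all that the statement requires. Your supplementary weak-duality argument (a perfect matching inside the equivalence subgraph attains the label upper bound, hence is maximum-weight) is sound and matches the primal--dual, complementary-slackness framing that the paper itself merely cites in its KM-M correctness discussion rather than proves, including your correct observation that the unbalanced case $n_1 \le n_2$ needs either restriction of the label sum to covered $V$-vertices or zero-weight padding, the latter being exactly the reduction the paper adopts when it sets $n := n_1 = n_2$.
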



\begin{algorithm}[t]
	\caption{The Matching Process of KM Algorithm}
	\label{alg:match}
	\begin{algorithmic}[1]
		\Input the bipartite graph $G(U,V,E)$, the origin vertex $u$ of the augment path, the current  matching edge set $\Pi$
		\Output the updated $\Pi$
		\Function{match}{$G(U,V,E)$, $u$, $\Pi$}
		\State \textbf{for} each $v \in V$ \textbf{do} $slack(v) \gets \infty$
		\While{$u$ is not matched}
            \State \textbf{for} each $v \in U \cup V$ \textbf{do} $vis(v) \gets \text{false}$	
		\If{FINDPATH($u$, $\Pi$)}
		\State break
		\EndIf
		\State $\Delta\gets\min_{v\in V \wedge vis(v)=false}\{slack(v)\}$
		\For{each $u\in U \wedge vis(u)=true$}
		\State $ex_u\gets ex_u-\Delta$
		\EndFor
		\For{each $v\in V$}
		\If{$vis(v) = true$}
		\State $ex_v \gets ex_v + \Delta$
		\Else
		\State $slack(v) \gets slack(v) - \Delta$
		\EndIf
		\EndFor
		\EndWhile
		\State \Return $\Pi$
		\EndFunction
	\end{algorithmic}
\end{algorithm}

The KM algorithm first initializes the label of each vertex, which is a preliminary estimation of the potential contribution of that vertex to the total weight of the matching. For each vertex in $U$ (resp. $V$), the label is set to the maximum weight of all edges connected to it (resp. 0). 
Then, the algorithm traverses all vertices in $U$, attempting to match each vertex $u \in U$ by identifying an augmenting path in the equivalent subgraph originating from $u$. \ralg{match} shows the matching process. If an augmenting path is found, each vertex belonging to $U$ in this path is matched with a corresponding vertex in $V$. Specifically, for an augmenting path $\{u_1,v_1,u_2,v_2,...,u_k,v_k\}$, each vertex $u_i$ is matched with vertex $v_i$ ($i \in \{1,...,k\}$). Otherwise, adjust the labels to create more edges in the equivalence subgraph. This is done by finding the smallest label difference for edges not in the equivalent subgraph but connecting to the alternating paths we are exploring (line 8). Subtract this value from the labels of vertices in $U$ in the paths (line 10) and add it to the labels of vertices in $V$ in the paths, as well as to vertices in $V$ outside the paths but connected by an edge to a vertex in the paths (line 14). Moreover, the KM algorithm designs a relaxation function $slack$ to optimize the efficiency of calculating $\Delta$ (lines 8 and 16). By applying the $slack$ function, the time complexity of calculating $\Delta$ can be reduced from $O(n^2)$ to $O(n)$, and the time complexity of the entire KM algorithm can be reduced from $O(n^4)$ to $O(n^3)$.

\subsection{Multi-way Karmarkar-Karp Algorithm}

\begin{table*}[t]
\renewcommand{\arraystretch}{1.3}
\caption{An Example of the Multi-Way Karmarkar-Karp Algorithm.}
\label{tbl:kk_example}
\centering
\begin{tabular}{|c|cc|cccccc|}
\hline
$i$ & T1 & T2 & T3 & T4 & T5 & T6 & T7 & T8 \\
\hline
1 & (26, 0, 0) & (22, 0, 0) & (19, 0, 0) & (13, 0, 0) & (8, 0, 0) & (4, 0, 0) & (3, 0, 0) & (2, 0, 0) \\
\hline
2 & \textbf{(26, 22, 0)} & (19, 0, 0) & (13, 0, 0) & (8, 0, 0) & (4, 0, 0) & (3, 0, 0) & (2, 0, 0) & - \\
\hline
3 & (13, 0, 0) & (8, 0, 0) & \textbf{(7, 3, 0)} & (4, 0, 0) & (3, 0, 0) & (2, 0, 0) & - & - \\
\hline
4 & \textbf{(13, 8, 0)} & (7, 3, 0) & (4, 0, 0) & (3, 0, 0) & (2, 0, 0) & - & - & - \\
\hline
5 & \textbf{(6, 4, 0)} & (4, 0, 0) & (3, 0, 0) & (2, 0, 0) & - & - & - & - \\
\hline
6 & (3, 0, 0) & \textbf{(2, 0, 0)} & (2, 0, 0) & - & - & - & - & - \\
\hline
7 & \textbf{(3, 2, 0)} & (2, 0, 0) & - & - & - & - & - & - \\
\hline
8 & \textbf{(1, 0, 0)} & - & - & - & - & - & - & - \\
\hline
\end{tabular}
\end{table*}

The KK algorithm iteratively solves the Multi-way Number Partition (MNP) problem that aims to minimize the range of partitions \cite{NPAll-1}. 
Suppose $n$ numbers need to be divided into $k$ partitions. 
The partitioning scheme of the KK algorithm is represented by a list of $k$-tuples, where the sum of elements in the same position across all tuples equals the weight of the respective partition. 
The tuple list is arranged in descending order based on the maximum element in each tuple.

Initially, a $k$-tuple is created for each given number, with the corresponding number as its first element and the remaining elements set to 0. These tuples 
indicate that all numbers are allocated into the first partition.
Subsequently, $n-1$ iterations are performed. In each round, the first two $k$-tuples in the list are popped and merged into a single tuple. The elements of these tuples are sorted in ascending and descending orders, respectively, before being added together at corresponding positions. This sorting aims to minimize the variance within the resulting $k$-tuple. For example, combining the 4-tuples $(5,3,2,0)$ and $(4,4,3,0)$ yields $(5+0, 3+3, 2+4, 0+4) = (5,6,6,4)$. Since the problem focuses on the differences between these partition sums, the tuple can be normalized by deducting the smallest value from all elements, i.e., $(5,6,6,4)$ can be normalized into $(1,2,2,0)$ and added back to the list of tuples. After completing $n-1$ rounds, only one tuple remains, representing the final normalized partitioning result.
A specific partitioning scheme can be derived from the merging process.

We present an example to illustrate the KK algorithm's iteration process in \rtbl{kk_example}, which aims to divide 
the number set $S = \{26, 22, 19, 15, 8, 4, 3, 2\}$ into 3 partitions. 
Column $i$ represents the tuple list arranged in order for the $i^{th}$ iteration round.
Initially, when $i = 1$, a 3-tuple is generated for each number. In each subsequent round, the tuples corresponding to columns $T1$ and $T2$ are popped for merging. The merged tuple is then reinserted into the list, which is highlighted in bold in the next row. For example, when $i = 2$, the tuples $(26,22,0)$ and $(19,0,0)$ are merged to form $(26,22,19) = (7,3,0)$, which is the tuple in column $T3$ for row $i = 3$. The merging process concludes when only one tuple remains. The final tuple $(1,0,0)$ represents the partition scheme $\{\{26,4,3\}, \{19,13\}, \{22,8,2\}\}$, with the sum of each partition as $\{33, 32, 32\}$ respectively.

\section{The Proposed Algorithm}
\label{subsec:alg}
In this section, we provide a detailed overview of the proposed Fast Iterative Match-Partition Hybrid Genetic Algorithm (FIMP-HGA). Initially, we outline the general scheme of FIMP-HGA, followed by an in-depth exploration of its search components.

\subsection{General Scheme}


As outlined in \Rsec{intro}, the FIMP-HGA consists of three stages: 
match, partition, and graph modification.
The primary innovations of FIMP-HGA include the incremental update approach for finding the optimal matching solution during the match stage and the hybrid genetic algorithm with an elite strategy during the partition stage. 
In the final stage of each iteration, the graph is strategically modified to facilitate potentially better matching schemes. 



The FIMP-HGA is depicted in \ralg{FIMP-HGA}.
The algorithm first performs some initialization, 
including the matching scheme of the original graph $G$, the tabu list $L_{t}$ for edges, etc. The solution space is then explored through iterations (lines 5 to 28),  terminating when the number of iterations without updating the best-found solution $t$ or the number of times executing edge recovery $t_{ER}$ reaches the set values $T$ and $T_{ER}$. 
In each iteration, the algorithm first obtains the weight of each vertex based on the matching scheme $\Pi$ (line 6), then solves the partition stage using the HGA (\rsec{HGA}) (line 7). Based on the obtained solution, a tabu edge $e_{t}$ is selected (line 8), and the best-found solution is attempted to be updated (lines 10 to 13).
If the current solution is significantly worse than the best-found solution, it indicates that the current tabu edges degrade solution quality, so certain tabu edges are selected for removal and added to the release list $L_{r}$ based on the edge recovery strategy (\rsec{ER}) (lines 15 to 18). The edges that reach the tabu tenure are also added to the release list (line 20). Finally, the edges in the release list are reinserted (lines 22 to 25), and the tabu edge is removed from graph $G'$ (line 26). 
The algorithm determines the new matching scheme in the next iteration by calling the proposed fast KM-M algorithm (\rsec{KM-M}) (lines 24 and 27).
After the iterations, the best-found solution is returned (line 29).

\begin{algorithm}[t]
    \caption{The FIMP-HGA Algorithm}
    \label{alg:FIMP-HGA}	
    \begin{algorithmic}[1]
        \Input the PMMWM instance input $G(U,V,E)$, $k$, $\bar{u}$, the maximum iterations $T$ without updates, the maximum number of edge recovery operators $T_{ER}$, the maximum iterations $T_{HGA}$ of HGA without updates
        \Output the best solution $\Pi, \mathcal{P}$ found
        \Function{FIMP-HGA}{$G$, $k$, $\bar{u}$, $T$, $T_{ER}$, $T_{HGA}$}
        \State $G' \gets G, \Pi \gets$ KM-M($G', \emptyset, \emptyset$)
        \State $L_{t} \gets \{\}, t \gets 0, t_{ER} \gets 0$
        \State $\Pi_{ans} \gets \emptyset, \mathcal{P}_{ans} \gets \{\}$
        \While{$t \leq T$ \AND $t_{ER} \leq T_{ER}$}
            \State $W \gets$ obtain vertex weights based on $\Pi$
            \State $\mathcal{P} \gets$ HGA($W, k, \bar{u}, T_{HGA}$)
            \State $e_{t} \gets $ select the tabu edge base on $\Pi,\mathcal{P}$
            \State $L_{r} \gets \emptyset$
            \If{$\Pi_{ans} = \emptyset$ \OR $f(\Pi, \mathcal{P})<f(\Pi_{ans}, \mathcal{P}_{ans})$}
                \State $\Pi_{ans} \gets \Pi, \mathcal{P}_{ans} \gets \mathcal{P}$
                \State $t \gets 0$
            \Else
                \State $t \gets t + 1$
                \If {$f(\Pi, \mathcal{P}) \gg f(\Pi_{ans}, \mathcal{P}_{ans})$}
                    \State $L_{r} \gets L_{r} \cup$ ER($L_{t}$)
                    \State $t_{ER} \gets t_{ER} + 1$
                \EndIf
            \EndIf
            \State $L_{r} \gets L_{r} \cup$ \{edges that reach tabu iteration\}
            \State $L_{t} \gets L_{t} \backslash L_{r}$ 
            \For{each $e_{uv} \in L_{r}$}
                \State add edge $e_{uv}$ to $G'$
                \State $\Pi \gets$ KM-M($G', \Pi, e_{uv}$)
            \EndFor
            \State remove $e_{t}$ from $G'$, $L_{t} \gets L_{t} \cup e_{t}$
            \State $\Pi \gets$ KM-M($G', \Pi, e_{t}$)
        \EndWhile
        \State \Return $\Pi_{ans}, \mathcal{P}_{ans}$
        \EndFunction
    \end{algorithmic}
\end{algorithm}

\subsection{Fast Matching Algorithm KM-M}
\label{subsec:KM-M}
The proposed KM-M algorithm 
is used to tackle the match stage efficiently by retaining the previous matching information 
to avoid redundant computation. Whenever an edge $e_{uv}$ is added or removed in graph $G'$, FIMP-HGA invokes KM-M to update the optimal matching results. Compared to the KM algorithm, the KM-M algorithm reduces the complexity of the match stage during iterations from $O(n^3)$ to $O(n^2)$.
In the following, we first introduce details of the KM-M algorithm and subsequently prove its correctness.


\subsubsection{Details of the KM-M Algorithm}


\begin{algorithm}[t]
	\caption{The KM-M Algorithm}
	\label{alg:km-m}	
	\begin{algorithmic}[1]
		\Input the bipartite graph $G(U,V,E)$, the matching scheme from the previous round $\Pi$, the modified edge $e'_{uv}$
		\Output the updated $\Pi$
		\Function{KM-M}{$G(U,V,E)$, $\Pi$, $e'_{uv}$}
		\If{$\Pi = \emptyset$}
            \State \textbf{for} each $v \in V$ \textbf{do} $ex_{v} \gets 0$
            \For{each $u\in U$}
            \State $ex_{u} \gets \max_{v\in V \wedge e_{uv}\in E}\{w'(e_{uv})\}$
            \EndFor
            \State $\Pi \gets \emptyset$
    		\For{$u \in U$}
    		\State $\Pi \gets$ MATCH($G$, $u$, $\Pi$)
    		\EndFor
		\Else
            \State $ex_{u} \gets \max\{ex_{u}, w'(e'_{uv}) - ex_{v}\}$
    	  \State $\Pi \gets$ MATCH($G$, $u$, $\Pi \backslash$\{the matching edge of $u$\})
		\EndIf
		\State \Return $\Pi$
		\EndFunction
	\end{algorithmic}
\end{algorithm}

The KM-M algorithm is shown in \ralg{km-m}. 
If the input matching scheme $\Pi$ is empty (the first iteration, line 2 in \ralg{FIMP-HGA}), the complete KM algorithm is conducted (lines 3 to 10). Otherwise, the current matching result $\Pi$ is incrementally modified to quickly obtain a new matching scheme (lines 12 to 13). If the weight $w'(e'_{uv})$ is greater than $ex_u + ex_v$, then set $ex_u$ to $w'(e'_{uv}) - ex_v$ (line 12). Consequently, remove $u$'s matching edge $e_u$ from $\Pi$, and re-match $u$ by the MATCH process (\ralg{match}) used in the KM algorithm (line 13). Edge removal is achieved by setting its weight to a sufficiently small value, ensuring the bipartite graph structure remains unchanged.

\rfig{km-m} is an example of the KM-M algorithm. \rfig{km-m1} illustrates a maximum weight perfect matching $\Pi$ in graph $G$ calculated by the KM algorithm. The red and blue edges are in the equivalence subgraph (satisfy $ex_u+ex_v=c_{uv}$), and the red edges constitute the $\Pi$. Suppose we remove edge $e_{u_1 v_1}$ by modifying its weight from $-2$ to $-500$ and yield a new graph $G'$. The KM-M algorithm performs as follows. 
Firstly, as \rfig{km-m2} shows, it removes the matching of $u_1$ from $\Pi$ and the modified edge $e_{u_1 v_1}$ from the equivalence subgraph. Since the modified edge weight is smaller than the original edge weight, there is no need to change $ex_u$.
Then it uses the matching algorithm (\ralg{match}) to find a matching edge for $u_1$. As \rfig{km-m3} shows, the algorithm adjusts the feasible label to add edge $e_{u_1 v_2}, e_{u_3 v_1}$ into the equivalence subgraph. 
Finally, as shown in \rfig{km-m4}, it finds an augmenting path $\{u_1,v_2,u_3,v_1\}$ (the dashed edges) and replaces the edge $e_{u_3 v_2}$ with $e_{u_1 v_2},e_{u_3 v_1}$ in the matching $\Pi$. The new matching is as good as the output of the KM algorithm on the entire graph $G'$. Obviously, the KM-M algorithm reduces a lot of redundant operations for calculating the matching of other vertices.

\begin{figure}[t]
    \centering
    \begin{subfigure}[b]{0.24\textwidth}
        \centering
		\setlength{\abovecaptionskip}{0.cm}
		\includegraphics[width=\linewidth]{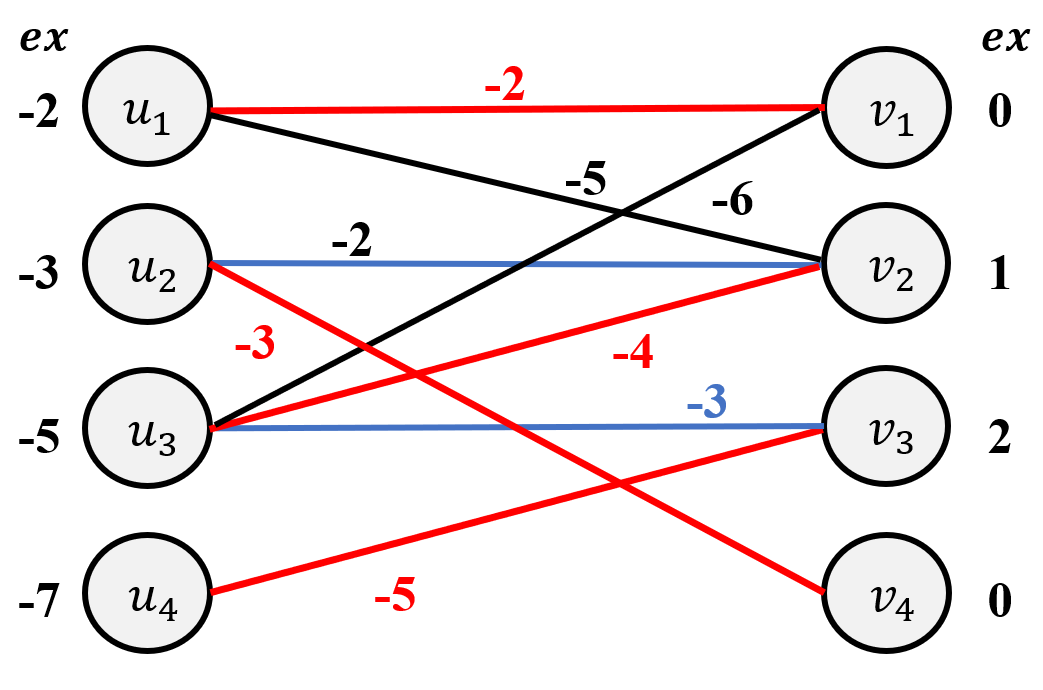}
		\caption{A matching $\Pi$ on $G$}
		\label{fig:km-m1}
    \end{subfigure}
    \hfill
   \begin{subfigure}[b]{0.24\textwidth}
        \centering
		\setlength{\abovecaptionskip}{0.cm}
		\includegraphics[width=\linewidth]{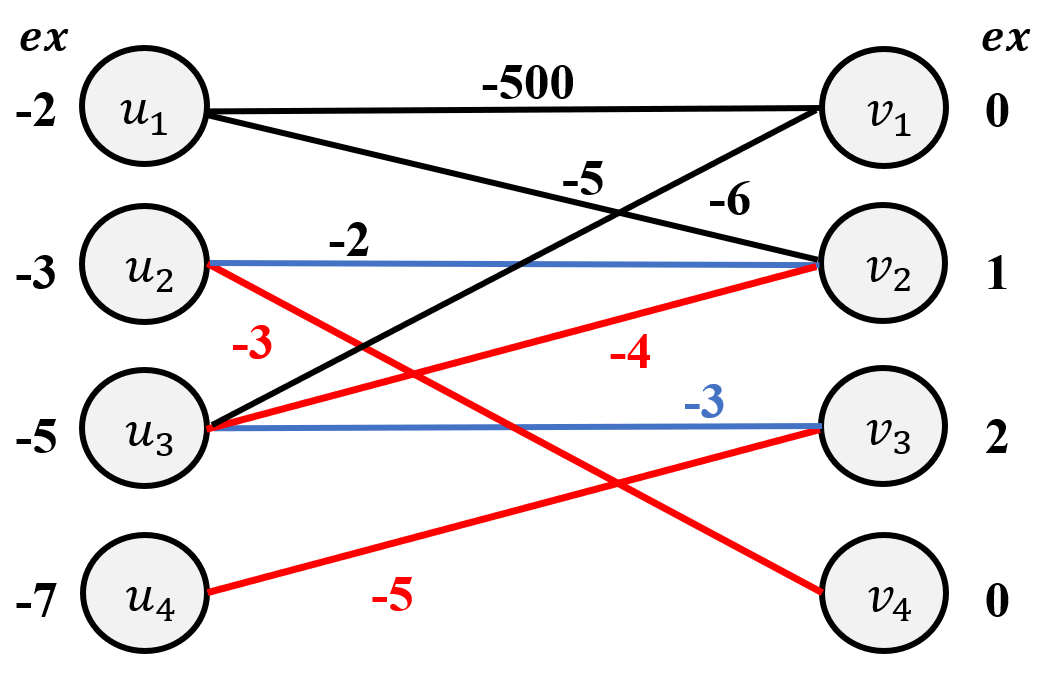}
		\caption{Modify the weight of an edge}
		\label{fig:km-m2}
    \end{subfigure}
    \begin{subfigure}[b]{0.24\textwidth}
        \centering
		\setlength{\abovecaptionskip}{0.cm}
		\includegraphics[width=\linewidth]{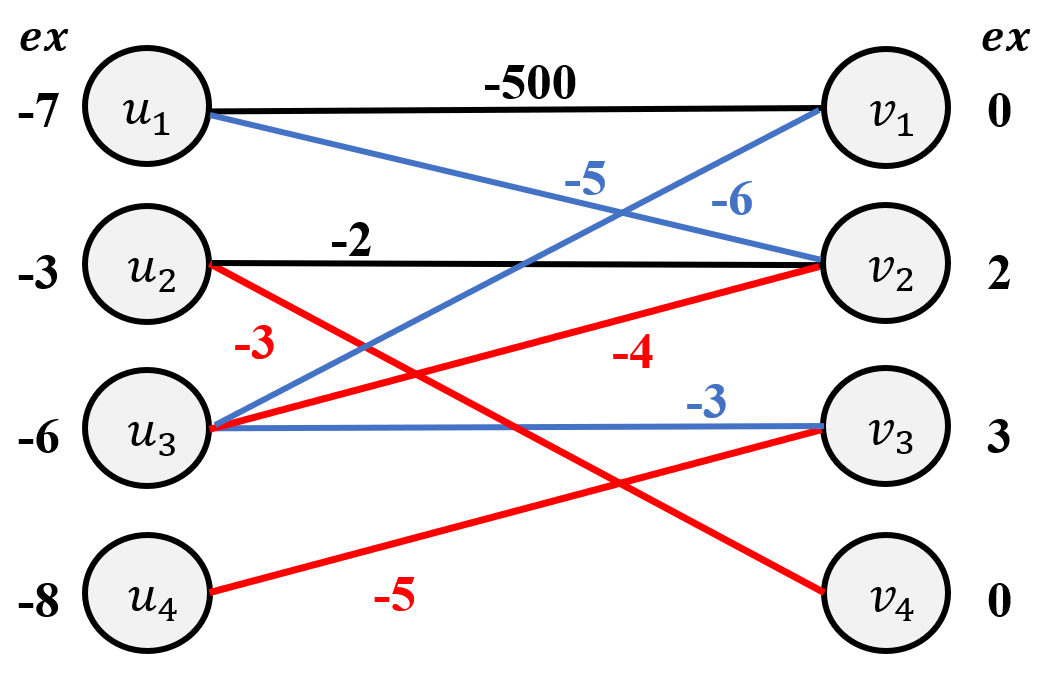}
		\caption{Adjust the feasible label}
		\label{fig:km-m3}
    \end{subfigure}
    \hfill
   \begin{subfigure}[b]{0.24\textwidth}
        \centering
		\setlength{\abovecaptionskip}{0.cm}
		\includegraphics[width=\linewidth]{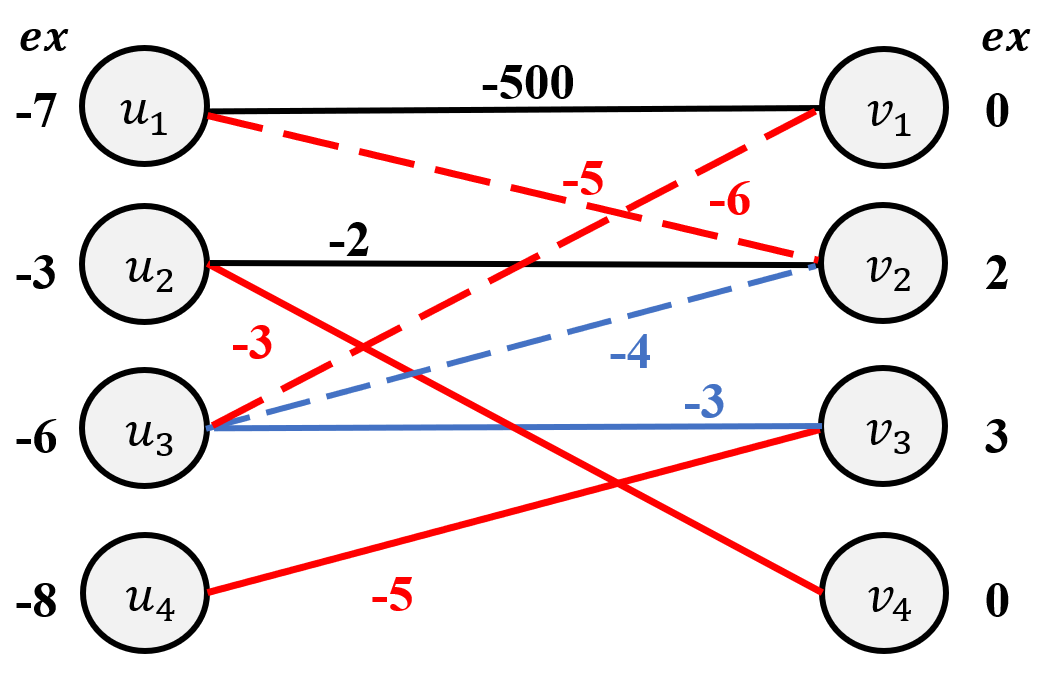}
		\caption{Find an augmenting path}
		\label{fig:km-m4}
    \end{subfigure}
\caption{An illustrate of the KM-M algorithm.}
\label{fig:km-m}   
\end{figure}


\subsubsection{The Correctness of the KM-M Algorithm}

We validate the correctness of the KM-M algorithm by the primal-dual method~\cite{DBLP:books/daglib/0069809}. 
The dual of the linear program (LP) of the MWPM problem is to minimize the sum of the feasible label. 
Building on insights from previous research \cite{LP1,LP2}, it is established that when both primal and dual solutions comply with the complementary slackness condition, they are deemed optimal for their respective LP formulations.
The complementary slackness condition for this problem stipulates that each edge $e_{uv} \in \Pi$ satisfies $ex_u + ex_v = w'(e_{uv})$, which is equivalent to the existence of a perfect matching in the equivalent subgraph. To find this solution, the KM algorithm starts with a feasible dual solution and decreases the sum of the feasible label to add more edges into the equivalent subgraph until a perfect matching exists.


Validating the correctness of the proposed KM-M algorithm is equivalent to proving the following theorem.

\begin{thm}
\label{thm:km-m}
Given a bipartite graph $G = (U,V,E)$ and a maximum weight perfect matching $\Pi$ of $G$ calculated by the KM algorithm. Suppose $G'$ is the graph obtained by modifying the weight of exactly one edge $e'_{uv}$ in $G$, and $w'(e'_{uv})$ is updated to the new weight of $e'_{uv}$ in $G'$. Let $e_u$ denote the matching edge of $u$. Setting $ex_u = \max\{ex_u,w'(e'_{uv}) - ex_v\}$ and performing the KM algorithm on $G'$ continuously from $\Pi \backslash$\{$e_u$\} lead to an optimal matching of $G'$.
\end{thm}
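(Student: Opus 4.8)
The plan is to argue entirely within the primal--dual framework already set up for the KM algorithm, treating the label vector $ex$ as a dual solution and the matching as a primal solution, and to show that the surgical modification preserves exactly the two invariants the KM algorithm relies on: dual feasibility, and tightness of every matched edge (membership in the equivalence subgraph). Once these invariants are re-established for the pair $(\Pi \setminus \{e_u\},\, ex)$ on $G'$, continuing the matching routine (\ralg{match}) from the single unmatched vertex $u$ is just an ordinary continuation of the KM algorithm, so its correctness transfers verbatim and the final matching is optimal for $G'$ by LP duality.

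First I would verify dual feasibility, i.e.\ that $ex_a + ex_b \ge w'(e_{ab})$ holds for every edge of $G'$ after the update $ex_u \gets \max\{ex_u,\, w'(e'_{uv}) - ex_v\}$. Since only one weight and one label change, the check splits into three cases. For edges not incident to $u$, nothing changes and feasibility is inherited from $G$. For edges incident to $u$ other than $e'_{uv}$, the weight is unchanged while $ex_u$ can only have increased, so the inequality is preserved. For the modified edge $e'_{uv}$ itself, the max directly guarantees $ex_u + ex_v \ge w'(e'_{uv})$: if the old label already satisfied $ex_u \ge w'(e'_{uv}) - ex_v$ the inequality is immediate, and otherwise the update forces $ex_u + ex_v = w'(e'_{uv})$ exactly. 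This single formula simultaneously covers the two situations arising in FIMP-HGA: a weight decrease (edge ban), where the max leaves $ex_u$ untouched, and a weight increase (edge recovery), where $ex_u$ is raised just enough to restore the constraint.

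Next I would establish complementary slackness for the reduced matching $\Pi \setminus \{e_u\}$. Because $u$ is incident to exactly one matched edge, namely $e_u$, every remaining edge of $\Pi \setminus \{e_u\}$ avoids $u$; hence neither its endpoint labels nor its weight was touched by the modification, so each such edge stays tight. Consequently $(\Pi \setminus \{e_u\},\, ex)$ is a legitimate KM intermediate state on $G'$: a feasible dual together with a partial matching all of whose edges lie in the equivalence subgraph, with $u$ the unique unmatched vertex of $U$.

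Finally I would invoke the correctness of the KM matching routine. Running \ralg{match} from $u$ maintains dual feasibility and keeps all matched edges tight while augmenting within the equivalence subgraph, terminating with $u$ matched and therefore with a matching that saturates all of $U$ and whose edges are all tight; by the complementary slackness for the MWPM linear program this matching is optimal for $G'$. The main obstacle is the feasibility bookkeeping of the label update, in particular the weight-increase case where raising $ex_u$ must repair the violated constraint on $e'_{uv}$ without breaking any other edge constraint. This goes through precisely because increasing a single label only relaxes every other inequality, and the one edge whose tightness it might destroy is $e_u$ itself, which we discard before re-matching.
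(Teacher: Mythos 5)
Your proposal is correct and follows essentially the same route as the paper's own proof: both work in the primal--dual framework, establish that the label update $ex_u \gets \max\{ex_u,\, w'(e'_{uv}) - ex_v\}$ preserves dual feasibility by a case analysis on whether the weight of $e'_{uv}$ decreased or increased, and then conclude that the KM matching routine can be continued from $\Pi \setminus \{e_u\}$ to reach an optimal matching of $G'$. If anything, your write-up is more complete than the paper's: you explicitly verify that every edge of $\Pi \setminus \{e_u\}$ remains tight (so complementary slackness holds for the reduced matching) and that raising $ex_u$ cannot violate feasibility on any other edge incident to $u$, two steps the paper leaves implicit.
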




\begin{proof}

This is equivalent to proving the feasibility of the dual solution after modification, that is, $ex_{u}+ex_{v} \ge w'(e'_{uv})$.
When the weight of $e'_{uv}$ decreases, since $ex_{u}$ and $ex_v$ remain unchanged, feasibility is still satisfied. Otherwise, if the weight increases, $ex_u$ will be adjusted to $w'(e'_{uv}) - ex_v$, thereby maintaining feasibility.
So, the dual solution is still feasible for the current matching, indicating that the previous information can be kept. 

Another approach is to discuss different scenarios based on whether edge $e'_{uv}$ is in the equivalence subgraph. If $w'(e'_{uv})$ is greater than $ex_u + ex_v$, it indicates that the edge's weight is underestimated. So, add $e_{uv}$ to the equivalence subgraph by adjusting $ex_u$ to $w'(e'_{uv}) - ex_v$, and then re-match vertex $u$. If $w'(e'_{uv})$ decreases from being equal to $ex_u + ex_v$, then this edge needs to be removed from the equivalence subgraph, and if it is a matched edge, vertex $u$ must be re-matched. In other cases, where $e_{uv}$ is not in the equivalence subgraph before and after the modification, there is no impact on the matching process.

Through these two methods, we can conclude that \rthm{km-m} holds, and therefore the KM-M algorithm is correct.
\end{proof}

\subsection{Hybrid Genetic Algorithm with Elite Strategy}
\label{subsec:HGA}
The population-based evolutionary algorithm maintains multiple individuals (i.e., solutions) simultaneously. The genetic algorithm is a representative evolutionary algorithm, which 
generates new solutions by combining diverse individuals with the crossover operator. 
The local search algorithm finds better solutions by exploring the neighborhood of the current solution. 
Our proposed Hybrid Genetic Algorithm (HGA) combines the advantages of the genetic algorithm and local search method, exhibiting excellent search ability for the partition stage.
Furthermore, an elite strategy is used in the genetic algorithm to improve the quality of the explored solution space by consistently selecting the elite individual as one of the parents in the crossover operator, thereby enhancing the algorithm's efficiency.



\begin{algorithm}[t]
	\caption{Hybrid Genetic Algorithm with Elite Strategy}
	\label{alg:HGA}	
	\begin{algorithmic}[1]
		\Input the vertex weight $W$, the PMMWM instance input $k$, $\bar{u}$, the maximum iterations $T_{HGA}$ without updates
		\Output the best partition scheme $\mathcal{P}$ found
		\Function{HGA}{$W$, $k$, $\bar{u}$, $T_{HGA}$}
        \State Initialize population $Pops$ by vertex weight $W$
        \State $t \gets 0$  
        \While{$|Pops| > 1$ \AND $t \leq T_{HGA}$}
            \State $\mathcal{P}_{elite} \gets $ best individual in $Pops$
            \State $\mathcal{P}_{mate} \gets $ another random individual in $Pops$
            \State $\mathcal{P}_{child} \gets$ Crossover($\mathcal{P}_{elite}$, $\mathcal{P}_{mate}$)
            \State $\mathcal{P}_{child} \gets$ MultilevelLocalSearch($\mathcal{P}_{child}$)
            \If {$f(\mathcal{P}_{child}) < f(\mathcal{P}_{elite})$}
                \State $t \gets 0$
            \Else
                \State $t \gets t + 1$ 
            \EndIf
            \State $Pops \gets $ UpdatePopulation($Pops$, $\mathcal{P}_{mate}$, $\mathcal{P}_{child}$)
        \EndWhile
		\State \Return best individual in $Pops$
		\EndFunction
	\end{algorithmic}                                      
\end{algorithm} 

The overall procedure of HGA is shown in \ralg{HGA}. The $f(\mathcal{P})$ is defined to represent the objective value of the partition scheme $\mathcal{P}$.
HGA starts the search by initializing the population $Pops$ (\rsec{pop}) (line 2). Next, it selects individuals $\mathcal{P}_{elite}$ and $\mathcal{P}_{mate}$ for evolution based on the elite strategy (\rsec{elite}) (lines 5 to 6) and employs the proposed Greedy Partition Crossover  (GPX) operator (\rsec{gpx}) to generate new offspring $\mathcal{P}_{child}$ (line 7). 
After optimizing $\mathcal{P}_{child}$ with the proposed Multilevel Local Search (MLS) algorithm (\rsec{mls}) (line 8), the population is updated (line 14). 
The algorithm terminates and returns the best-found solution when there is only one individual left in the population or if no better solution is updated within a certain number of iterations $T_{HGA}$.


Detailed descriptions of the key components of HGA are as follows.

\subsubsection{Population Initialization}
\label{subsec:pop}


HGA generates the initial population with the following methods to ensure both diversity and quality.

\begin{itemize}
\item{\textbf{Multi-way Karmarkar-Karp algorithm (KK):} obtaining the partitioning scheme through the tuple merging process. If the number of vertices in any partition exceeds its capacity limit, these excess vertices are then randomly allocated to other partitions that have remaining capacity.}
\item{\textbf{Greedy algorithm (GD):} vertices are sorted in descending order of their weights, and each vertex is sequentially placed into the partition with the smallest weight. When selecting, only partitions that have not reached the capacity limit are considered, and this principle applies to the random greedy algorithm as well.}
\item{\textbf{Random Greedy algorithm (RGD):} this approach adds randomness to the greedy algorithm. When assigning a vertex, rather than always opting for the partition with the smallest weight, there is a 50\% chance of randomly selecting a partition.}
\end{itemize}


The KK and GD methods each generate a solution to uphold the population's quality. The remaining solutions are produced using the RGD method, introducing varied characteristics to ensure population diversity. Before integration into the population, each generated solution is optimized with MLS. Typically, the solution generated by the KK method assumes the role of the elite individual.


\subsubsection{Greedy Partition Crossover Operator}
\label{subsec:gpx}

The crossover operator is used to simulate the evolutionary process in biological genetics, passing problem-specific genetic knowledge from parents to offspring solutions. By combining features from both parents, the crossover creates new feature combinations and introduces a degree of randomness. This ensures diversity in the population, mirroring the genetic diversity in natural selection and evolution.

We propose the GPX operator for the partition stage. A similar crossover operator is also employed in the graph coloring problem \cite{GCP}. In the GPX operator, we alternately choose a partition containing the highest number of vertices from each parent solution and incorporate this partition into the child solution. 
    Vertices that have been added are not considered in subsequent steps, thereby treating them as removed from the parents. After the $k$ rounds of the process, some vertices may still not be included in the child solution. These remaining vertices are then randomly added to any partition that has not yet reached its capacity limit. 

\rfig{CO} illustrates an example of our crossover operator. 
The vertex set is $\{A, B, C, D, E, F, G, H, I, J\}$, and the partition number is 3. Each row represents a partitioning scheme of a solution, with different colored blocks indicating different partitions. The crossover process is divided into four steps. In the first three steps, the partitions added to the child's solution are indicated by black boxes. In the fourth step, the remaining vertex $\{D\}$ is randomly placed into one of the partitions in the child solution.

\begin{figure}[t]
\centering
\includegraphics[width=\linewidth]{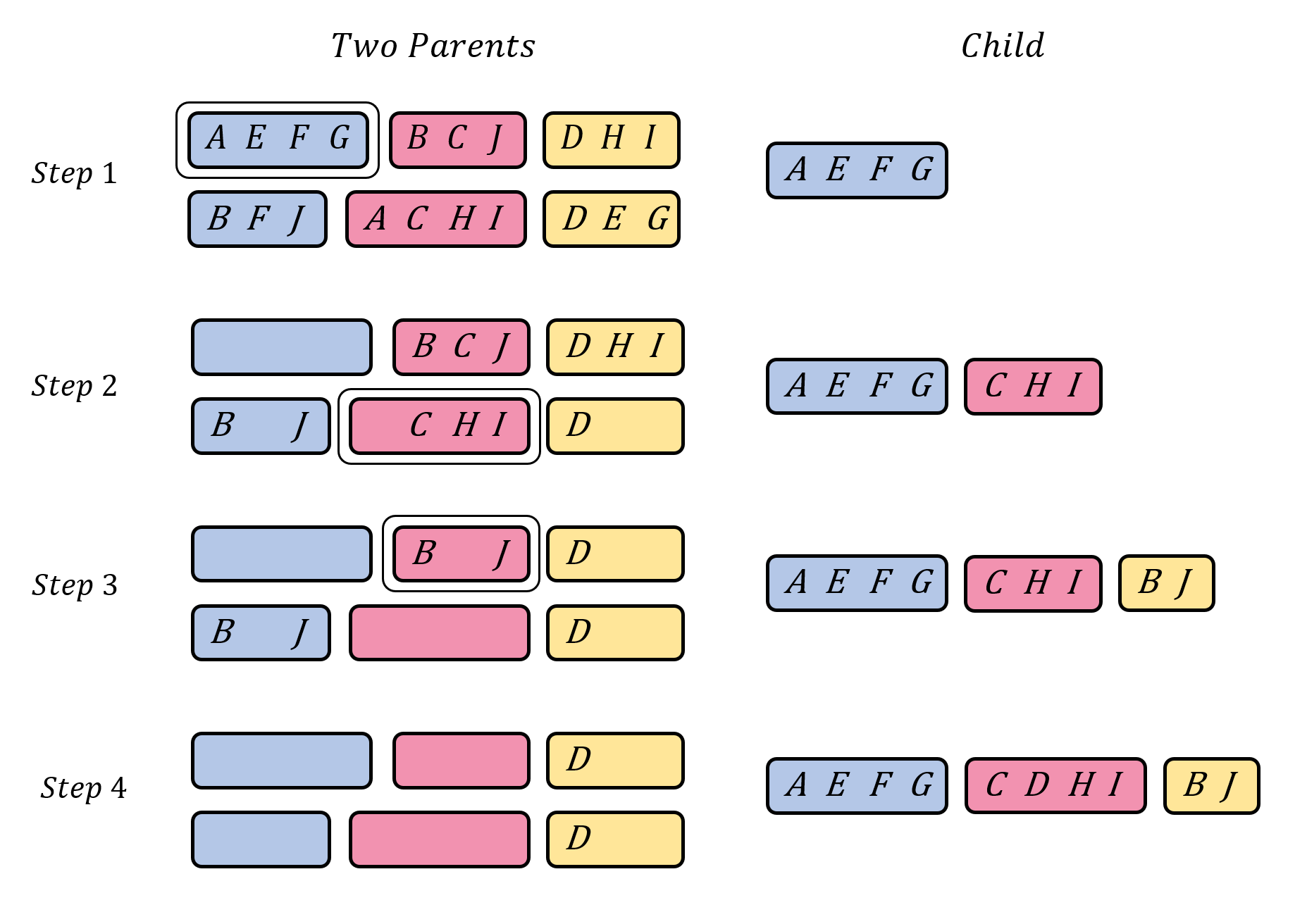}
\caption{An example of the GPX operator for a vertex set of 10 elements (A - J) and three partitions.}
\label{fig:CO}
\end{figure}

\subsubsection{Multilevel Local Search}
\label{subsec:mls}


HGA uses the MLS algorithm to intensively explore neighboring solutions by adjusting the partitions of certain vertices. 
Neighborhood actions in the MLS algorithm concentrate on the current partition with the largest weight, attempting to optimize the 
solution quality through swap operations while aiming to maintain a balanced weight distribution across the partitions.


Each round of the MLS algorithm involves three levels, with each level conducting searches of varying breadth and depth based on different exchange operations.  
Initially, the first level local search focuses on the two partitions with the largest difference in weight and explores all possible exchange operations to execute the best one, which may 
not only enhance the solution but also result in a more balanced partitioning scheme. 
If the solution is successfully optimized, the same level is continued and repeated until no further solution optimization is possible, at which point the second level local search is employed. 
The second level considers more partitions but simpler exchange operations, exploring the possible exchanges and stopping when an improvement operator is found.                                                                                                      
If both of the above methods fail to improve the solution, indicating that the current solution is challenging to optimize through small-scale vertex exchanges. Consequently, the reorganization of the two partitions with the largest weight difference is designed as the third level. If the modification improves the solution, it is retained, and the search continues into the next round, starting from the first level. Otherwise, the local search terminates. 

Let $U_{max}$ and $U_{min}$ respectively be the partitions with the largest and smallest weights. The specific operations of each level of the MLS algorithm are as follows.

\begin{itemize}
\item{\textbf{First Level:} Select vertices from $U_{max}$ and $U_{min}$, respectively, for exchange. First, sort vertices in $U_{min}$ by their weights. Attempt to swap a vertex from $U_{max}$ with several consecutive vertices in $U_{min}$. Execute the best operator, specifically the one that minimizes the weight difference between the two partitions. The enumeration range for vertices in $U_{min}$ can be pruned based on monotonicity.}
\item{\textbf{Second Level:} Attempt to swap a vertex from $U_{max}$ with a vertex from another partition. Traverse the partitions in ascending order of their weights and execute the first operation found that can optimize the solution quality.}
\item{\textbf{Third Level:} Use the 2-way Karmarkar-Karp algorithm to redistribute the vertices within $U_{max}$ and $U_{min}$}
\end{itemize}

\subsubsection{Elite Strategy for Crossover and Population Update}
\label{subsec:elite}

Crossover plays an essential role in genetic algorithms, and different algorithms have their respective methods for selecting the parents. Although random selection, as the most commonly used method, ensures diversity, the generated offspring do not inherit enough information from the elite (best) individuals, leading the exploration of the solution space to be inefficient. To address this issue, we introduce the elite strategy in our genetic algorithm. The elite strategy consistently selects the elite individual as one of the parents in each crossover, attempting to integrate the features of other individuals into the elite individual in each, thereby continuously optimizing the elite. This approach ensures a high quality of exploration in the solution space while maintaining a degree of diversity.



The mechanism for population update also reflects the elite strategy. After optimizing the offspring $\mathcal{P}_{child}$ using the MLS algorithm, the strategy for population update via the $\mathcal{P}_{child}$ is shown in \ralg{UP}. If it is better than the elite, this offspring becomes the new elite individual, replacing the previous one (lines 4 to 5). If the offspring's objective value is equal to one of its parents, it indicates that neither $\mathcal{P}_{child}$ nor $\mathcal{P}_{mate}$ possess suitable features to improve the elite. Therefore, $\mathcal{P}_{mate}$ is directly removed to accelerate the iteration process (line 8). Otherwise, $\mathcal{P}_{mate}$ is replaced with the offspring (lines 9 to 10). This update method accelerates the search process by focusing on the most promising solutions.

\begin{algorithm}[t]
	\caption{Update Population}
	\label{alg:UP}	
	\begin{algorithmic}[1]
		\Input the population $Pops$, the mate individual $\mathcal{P}_{mate}$ and the offspring $\mathcal{P}_{child}$ during this round crossover
		\Output the updated population $Pops$
		\Function{UpdatePopulation}{$Pops$, $\mathcal{P}_{mate}$, $\mathcal{P}_{child}$}
        \State $\mathcal{P}_{elite} \gets $ best individual in $Pops$
        \If {$f(\mathcal{P}_{child}) < f(\mathcal{P}_{elite})$}
                \State $Pops \gets Pops \backslash \{\mathcal{P}_{elite}\}$
                \State $Pops \gets Pops \cup \{\mathcal{P}_{child}\}$ 
        \Else
            \If {$f(\mathcal{P}_{child})$ = $f(\mathcal{P}_{elite})$ \OR $f(\mathcal{P}_{child})$ = $f(\mathcal{P}_{mate})$}
                \State $Pops \gets Pops \backslash \{\mathcal{P}_{mate}\}$
            \Else
                \State $Pops \gets Pops \backslash \{\mathcal{P}_{mate}\}$ 
                \State $Pops \gets Pops \cup \{\mathcal{P}_{child}\}$ 
            \EndIf
        \EndIf
        \State \Return $Pops$
		\EndFunction
	\end{algorithmic}                                      
\end{algorithm} 


\subsection{Graph Modification Strategy}
\label{subsec:ER}
After obtaining the solution for the current round, FIMP-HGA modifies the matching results for the next round by forbidding and releasing edges. Different matching schemes imply the exploration of diverse solution spaces, providing possibilities for improving the optimal solution.
The strategies influencing the tabu status of edges are as follows.

\subsubsection{Tabu Strategy in Each Iteration}

Based on the partition results of each iteration, the matching edge of the vertex with the largest weight in the partition $U_{max}$ is tabued. This approach ensures that the matching scheme in the next iteration is different, and it is possible that the resulting match could lead to a more balanced distribution of weights across the partitions.

The number of tabu rounds is determined based on the number of vertices and the density of the graph. The specific settings within FIMP-HGA are as follows:

\begin{equation}
\label{eqn:Tabu}
    TabuStep = min(0.2 \times |U|, 0.1  \times |E|).
\end{equation}

\subsubsection{Edge Recovery Strategy}

When the current solution deviates significantly from the optimal solution due to the tabu edges, it indicates that the quality of the solution space being searched is insufficient. To escape this situation, the edge recovery strategy is employed, which involves releasing multiple edges at once to break out of the current search space.

The theoretical lower bound of an instance is defined as the total weights of the edges in the matching result $\Pi_G$ of the original graph, divided by the number of partitions, denoted as follows:

\begin{equation}
\label{eqn:LB}
    LowerBound = \frac{\sum_{e_{uv} \in \Pi_{G}} w(e_{uv})}{m}.
\end{equation}

Edge recovery is performed when the current solution $\Pi, \mathcal{P}$ and the optimal solution $\Pi_{ans}, \mathcal{P}_{ans}$ meet the following condition: 

\begin{equation}
\label{eqn:ER}
    \frac{f(\Pi, \mathcal{P})-f(\Pi_{ans}, \mathcal{P}_{ans})}{f(\Pi, \mathcal{P}) - LowerBound} > 0.9.
\end{equation}

The strategy for edge recovery is to release all tabu edges with a $50\%$ probability immediately.
When the number of executed edge recovery operations reaches a certain count, indicating that FIMP-HGA has substantially explored the accessible high-quality solution space, the algorithm concludes directly to prevent unnecessary searches.

\section{Experimental Results}
\label{subsec:exp}
This section first introduces the method for generating new benchmarks that are compatible with the existing ones and then employs the new benchmarks to evaluate our proposed FIMP-HGA \footnote{The code of our algorithm and the generated benchmarks will be available online upon acceptance.}. 

\subsection{Instance Classification and Generation}
\label{subsec:DS}

It is important for benchmarks to include instances with diverse features. For instances of the PMMWM problem, alongside factors such as the number of vertices, the number of partitions, and the capacity of each partition, the structure of the bipartite graph is also important. We categorize different bipartite graphs based on two features: consistency and density. Firstly, we provide a comprehensive description of these two features. Subsequently, we analyze the relationship between the generated instances and the previous ones. Since $n_1 \neq n_2$ can be reduced to the case of $n_1 = n_2$ by adding edges with zero weight, we define $n: = n_1 = n_2$ in this paper. 

\subsubsection{Consistency}
\label{subsec:con}


This feature is used to measure whether the vertices in $V = \{v_1, v_2, ..., v_n\}$ have a consistent matching priority for the vertex in $U = \{u_1, u_2, ..., u_n\}$. 
For example, if the edge with the maximum weight from each vertex in $V$ consistently connects to $u_1$ in $U$, then $u_1$ is said to satisfy consistency. Similarly, if the top three edges with the highest weights from each vertex in $V$ respectively connect to $u_1, u_2,$ and $u_3$ in $U$, then these three vertices in $U$ are considered to satisfy consistency in this graph.

The consistency weight, denoted as $Con$, is defined within the range of $0-100$. It represents the percentage of vertices in set $U$ that satisfy consistency in the graph. 
Higher consistency not only increases the complexity of decision-making in the match stage but also leads to larger differences in the vertex weights, which challenges the effectiveness of the partition stage.
The method for generating instances with specific consistency features is as follows:

We first randomly generate $n^2$ rational numbers on the interval $[1, 1000]$ and a complete bipartite graph $G(U,V,E)$ with vertex sets $U=\{u_1,...,u_n\}$ and $V=\{v_1,...,v_n\}$. We sort the rational numbers in non-decreasing order and store them in list $L$. Then we repeat the following operations on each vertex $v_i \in V$ from $v_1$ to $v_n$, to assign $n$ rational numbers in $L$ to the $n$ edges incident with $v_i$ in $E$. 
For $v_i \in V$, we assign the first $\lfloor n*Con\%\rfloor$ elements of $L$ to the edges $(u_1 ,v_i)$, $(u_2,v_i)$, $...$, $(u_{\lfloor n*Con\%\rfloor},v_i)$ and remove these elements from $L$. 
For the remaining edges incident with $v_i$, we randomly assign the remaining elements in $L$ in turn and remove the assigned elements from $L$.

\subsubsection{Density}

Following the instance generation method of the previous section, the resulting graphs are all complete. To fully evaluate the algorithm, some edges are randomly removed from those instances, changing the graph's density while preserving its consistency features.
The density of the bipartite graph is defined by weight, $Den$, which falls within the range of $0-100$. This value represents that the number of edges in the given example is equal to $Den\%$ of what it would be in a complete bipartite graph. 
To ensure that a valid solution exists after edge deletion, all edges connecting $u_i$ to $v_i$ are retained. From the remaining edges, we randomly select and delete $n^2 * (1-Den\%)$ edges.

\subsubsection{Relationship with Previous Instances}
Our method of generating instances is compatible with the previous approaches \cite{a1}.
The instances are classified based on the weights of the consistency and density features. For example, the instance with a consistency weight of $70$ and a bipartite graph density weight of $80$ can be categorized as Con70Den80. For each of the benchmarks previously used to test MP$_\text{LS}$ — BPS70, BPS80, RAND, SPARSE70 and SPARSE80, there is a corresponding type in our instances. The BPS70 and BPS80 correspond to Con70Den100 and Con80Den100, respectively; the RAND corresponds to Con0Den100; the SPARSE70 and SPARSE80 correspond to Con0Den70 and Con0Den80, respectively. 
Therefore, our generated instances represent an expansion and refinement of the previous dataset.

\subsection{New Benchmarks}


We employ our instance generation method described in the previous section to construct four benchmarks for a comprehensive evaluation. The benchmarks are divided into the following two categories:

\begin{itemize}
    \item{The first benchmark belongs to the first category. It is used to comprehensively evaluate the algorithm's efficiency and performance across different scenarios. This is achieved through an overall evaluation using a large number of instances with different features. All benchmarks used for testing the MP$_{\text{LS}}$ algorithm are reflected in this benchmark.}
    \item{The subsequent three benchmarks belong to the second category. They are used to assess how the density and consistency of bipartite graphs, as well as the number of partitions, influence the outcomes and the performance of the algorithms. The generation of benchmarks is achieved by using the method of controlling variables, keeping other parameters constant, and only changing the values that need to be tested.}
\end{itemize}

The details of these four benchmarks are as follows.

\subsubsection{Benchmark-All}

This benchmark has 20 distinct feature combinations, encompassing $Con \in \{0, 25, 50, 75, 100\}$ and $Den \in \{25, 50, 75, 100\}$. For each category, we generate instances with different parameter settings. The size of the bipartite graphs from 50 to 500, i.e., $n=50,100,150, ...,500$. If $\lfloor0.125n\rfloor - 2>12$, we set $m$ to 2, $\lfloor0.04n\rfloor$, $\lfloor0.08n\rfloor$, or $\lfloor0.125n\rfloor$. Otherwise, if $\lfloor0.125n\rfloor - 2>8$, we set it to 2, $\lfloor0.04n\rfloor$, or $\lfloor0.125n\rfloor$. If $\lfloor0.125n\rfloor - 2 \leq 12$, we set $m$ to 2 or $\lfloor0.125n\rfloor$. The maximum number of vertices in each partition $\bar{u}$ is set to $\lceil\dfrac{n}{m}\rceil$, or $n$. We generate one instance for each parameter setting, so there are a total of 1480 instances in this benchmark. 
%



\subsubsection{Benchmark-Den}


We set $n=300, m=20, \bar{u}=18, Con=75, Den=20, 30, 40, ..., 100$. For each different value of $Den$, 10 instances are generated, resulting in a total of 90 instances.

\subsubsection{Benchmark-Con}

We set $n=300, m=20, \bar{u}=18, Den=75, Con=0, 10, 20, ..., 100$. For each different value of $Con$, 10 instances are generated, resulting in a total of 110 instances.

\subsubsection{Benchmark-M}

We set $n=300, \bar{u}=300, Den=75, Con=75, m=2, 7, 12, ..., 32$. For each different value of $m$, 10 instances are generated, resulting in a total of 70 instances.

\subsection{Experimental Setup}

\subsubsection{Experimental Settings}

All the experiments were performed on an AMD Ryzen 5 3600 6-Core Processor at 3.60 GHz and 16GB of RAM, running Windows10 64-bit. All the tested algorithms are implemented in C++ with ``-O2'' option.

\subsubsection{Comparison Program}

We compare FIMP-HGA with the MP$_\text{LS}$ algorithm, which is the state-of-the-art algorithm for solving the PMMWM problem. We implemented the MP$_\text{LS}$ algorithm as described by \citet{a1} and verified its correctness. Subsequently, we ran MP$_\text{LS}$ and the five versions of our proposed algorithm, comprehensively evaluating FIMP-HGA through the results on the new benchmarks. The names and introductions of the algorithms are as follows.

\begin{itemize}
\item{\textbf{MP$_\text{LS}$:} The state-of-art algorithm for PMMWM problem.}
\item{\textbf{FIMP-LS}: Replace the KM algorithm used in the match stage of each iteration in MP$_\text{LS}$ with the KM-M algorithm.}
\item{\textbf{FIMP-MLS-GD:} Based on FIMP-LS, change the local search algorithm used in the partition phase to the MLS algorithm proposed in this paper. Initial solutions are generated using the GD method.}
\item{\textbf{FIMP-MLS:} Similar to FIMP-MLS-GD, change the initial solution to be generated using the KK method.}
\item{\textbf{FIMP-HGA-GD:} Based on FIMP-MLS-GD, use HGA for the partition stage. For the initial population, use the GD and RGD methods. Furthermore, the graph modification strategy outlined in this paper is implemented..}
\item{\textbf{FIMP-HGA:} Based on FIMP-MLS-GD, add the KK method to the initial population of HGA. This is the final algorithm proposed in our paper.}
\end{itemize}




In the experiment, the static parameter settings of the modules we propose are listed in Table \rtbl{PARAM}.

\begin{table}[t]
\centering
\caption{Parameter settings.}
\label{tbl:PARAM}
\scalebox{1.0}{
\begin{tabular}{@{}llr@{}}
\toprule
Parameter       & Description                               & Value \\ \midrule
$T$           & maximum unimproved iterations in FIMP-HGA             & 20    \\
$T_{ER}$       & edge recovery execution limits in FIMP-HGA                     & 8     \\
$T_{HGA}$        & maximum unimproved iterations in HGA       & 20  \\
$|Pops|$         & population size in HGA                           & 5   \\ \bottomrule
\end{tabular}}
\end{table}



\subsubsection{Evaluation Metrics}

The metrics used to evaluate the performance of the algorithm $Alg$ for a given instance include the solution quality $SQ_{Alg}$ and the runtime $RT_{Alg}$. Each algorithm is run five times with different seeds on an instance, and the average value is taken to reduce the randomness. 

For different algorithms we propose, their effectiveness is evaluated based on the metric improvement over  MP$_\text{LS}$ across different instances. 
The effectiveness of the algorithm in optimizing the solution quality is benchmarked against the theoretical lower bound (\reqn{LB}) of the test instances.
The improved objective value $V_{Opt}$ and improvement ratio $R_{Opt}$ of $Alg$ compared to MP$_\text{LS}$ is calculated using \reqn{res}.
If the result of MP$_\text{LS}$ already reaches the theoretical lower bound, then $R_{Opt}$ is set to 0. 

\begin{equation}
\begin{aligned}
\label{eqn:res}
    V_{Opt} &= SQ_{\text{MP}_{\text{LS}}} - SQ_{Alg}, \\
    R_{Opt} &= \frac{V_{Opt}}{SQ_{\text{MP}_{\text{LS}}}-LowerBound} \times 100.
\end{aligned}
\end{equation}

Similarly, the runtime improvement of $Alg$ over MP$\text{LS}$ is evaluated using \reqn{time}.

\begin{equation}
    \label{eqn:time}
    T_{Opt} = \frac{RT_{\text{MP}_{\text{LS}}} - RT_{Alg}}{RT_{\text{MP}_{\text{LS}}}} \times 100.
\end{equation}

All the developed algorithms are evaluated using $V_{Opt}$, $R_{Opt}$, and $T_{Opt}$.


\subsection{Results and Analyses}

We first test all algorithms on Benchmark-All to analyze the impact of different strategies on the algorithm performance. Then, we conduct further experiments and analysis on the remaining benchmarks.

\subsubsection{Comprehensive Evaluation}

After categorizing the test results of different algorithms on Benchmark-All based on the number of vertices in the bipartite graph, we calculate the average values separately, resulting in \rtbl{all}. 
We use \rfig{NRes} and \rfig{NTime} to respectively display the trends of $R_{Opt}$ and $T_{Opt}$ with respect to $n$ for different algorithms.

\begin{table*}[t]
\centering
\caption{Improvement of all algorithms over MP$_\text{LS}$ on the Benchmark-All.}
\begin{threeparttable}
\label{tbl:all}
\sisetup{
  mode=text, 
  detect-all
}
\begin{tabular}{@{}cSSSSSSSSSS@{}}
\toprule
\multirow{2}{*}{$n$}& \multicolumn{2}{c}{FIMP-LS} & \multicolumn{2}{c}{FIMP-MLS-GD} & \multicolumn{2}{c}{FIMP-MLS} & \multicolumn{2}{c}{FIMP-HGA-GD} & \multicolumn{2}{c}{FIMP-HGA} \\
\cmidrule(r){2-3} \cmidrule(r){4-5} \cmidrule(r){6-7} \cmidrule(r){8-9} \cmidrule(r){10-11}
& {$R_{Opt}$} & {$T_{Opt}$} & {$R_{Opt}$} & {$T_{Opt}$} & {$R_{Opt}$} & {$T_{Opt}$} & {$R_{Opt}$} & {$T_{Opt}$} & {$R_{Opt}$} & {$T_{Opt}$} \\
\midrule
50  & 0.00 & 89.69 & 21.92 & 92.93 & 55.92 & 91.48 & 73.18 & 76.52 & 91.06 & 71.31 \\
100 & 0.11 & 92.97 & 39.34 & 94.48 & 63.31 & 94.22 & 76.29 & 80.69 & 94.50 & 81.94 \\
150 & 0.35 & 93.81 & 51.73 & 95.22 & 65.70 & 94.87 & 79.70 & 81.99 & 95.00 & 85.63 \\
200 & -1.04 & 94.47 & 61.58 & 95.52 & 70.91 & 95.29 & 80.81 & 84.67 & 94.69 & 87.64 \\
250 & -0.47 & 94.96 & 66.58 & 95.62 & 73.26 & 95.53 & 83.14 & 86.54 & 95.19 & 90.36 \\
300 & -0.54 & 95.16 & 73.12 & 95.82 & 78.65 & 95.74 & 83.30 & 88.37 & 95.59 & 91.59 \\
350 & -0.31 & 95.23 & 78.87 & 95.80 & 84.69 & 95.72 & 87.81 & 89.16 & 95.81 & 92.13 \\
400 &  0.95 & 95.19 & 78.77 & 95.80 & 80.69 & 95.70 & 86.40 & 89.79 & 93.93 & 92.62 \\
450 & -1.03 & 95.53 & 78.80 & 96.14 & 83.97 & 96.06 & 87.49 & 91.03 & 94.56 & 93.72 \\
500 & -1.61 & 95.56 & 82.09 & 95.99 & 84.88 & 95.99 & 88.04 & 92.14 & 93.78 & 93.91 \\
\bottomrule
\end{tabular}
\begin{tablenotes}
\item $R_{Opt}$ and $T_{Opt}$ can be understood as the percentage of improvement ratio.
\end{tablenotes}
\end{threeparttable}
\end{table*}


\begin{figure}[t]
  \centering
  \begin{subfigure}{\linewidth}
    \centering
    \includegraphics[width=\linewidth]{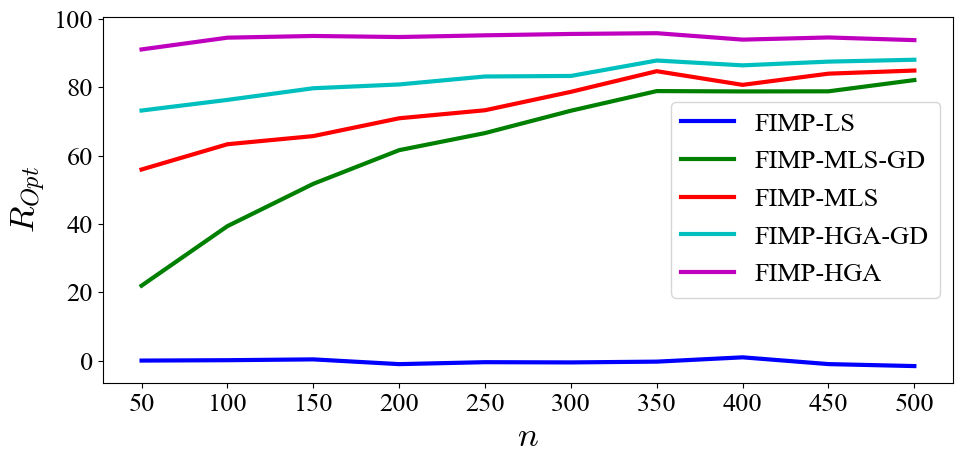}
    \caption{}
    \label{fig:NRes}
  \end{subfigure}\\ 
  \begin{subfigure}{\linewidth}
    \centering
    \includegraphics[width=\linewidth]{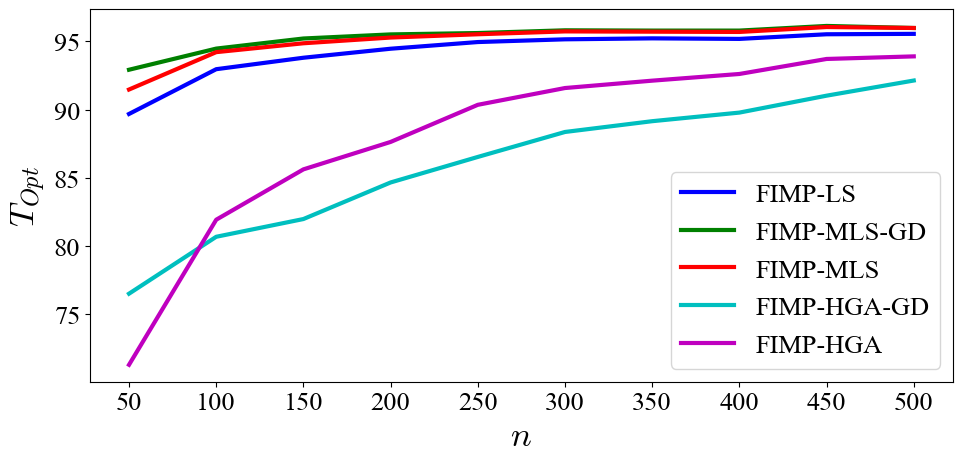} 
    \caption{}
    \label{fig:NTime}
  \end{subfigure}
  \caption{Trend of $R_{Opt}$ (a) and $T_{Opt}$ (b) for our proposed algorithmscga on Benchmark-All with $n = 50$ to $500$. $R_{Opt}$ and $T_{Opt}$ all increase with $n$ for all algorithm.}
  \label{fig:NAll}
\end{figure}

\begin{table*}[t]
\caption{Improvement of three algorithms over MP$_\text{LS}$ on Benchmark-Den and their concrete performance.}
\label{tbl:den}
\centering
\sisetup{
  mode=text, 
  detect-all
}
\begin{tabular}{@{}cccSccSccccccccc@{}}
\toprule
\multirow{2}{*}{$Den$} & \multirow{2}{*}{LB} & \multicolumn{2}{c}{MP$_\text{LS}$} & \multicolumn{4}{c}{FIMP-LS} & \multicolumn{4}{c}{FIMP-MLS} & \multicolumn{4}{c}{FIMP-HGA} \\
\cmidrule(r){3-4} \cmidrule(r){5-8} \cmidrule(r){9-12} \cmidrule(r){13-16}
& & OV & \text{RT} & SQ & \text{RT} & {$R_{Opt}$} & {$T_{Opt}$} & OV & RT & {$R_{Opt}$} & {$T_{Opt}$} & OV & RT & {$R_{Opt}$} & {$T_{Opt}$} \\
\midrule
20 & 6630.29 & 6632.17 & 9.47 & 6632.17 & 0.39 & 0.00 & 95.71 & 6630.78 & 0.38 & 72.34 & 95.81 & 6630.33 & 0.53 & 97.32 & 94.12 \\
30 & 6624.47 & 6626.11 & 9.04 & 6626.16 & 0.39 & -2.20 & 95.57 & 6624.90 & 0.40 & 71.54 & 95.47 & 6624.52 & 0.60 & 96.95 & 93.03 \\
40 & 6624.61 & 6626.28 & 12.23 & 6626.29 & 0.39 & -0.79 & 96.63 & 6625.03 & 0.39 & 73.51 & 96.67 & 6624.66 & 0.65 & 96.80 & 94.43 \\
50 & 6632.66 & 6634.08 & 10.21 & 6634.05 & 0.40 & 1.04 & 96.02 & 6633.02 & 0.41 & 72.34 & 95.91 & 6632.71 & 0.61 & 96.06 & 93.77 \\
60 & 6625.43 & 6627.04 & 12.95 & 6627.00 & 0.42 & 2.39 & 96.62 & 6625.91 & 0.41 & 69.43 & 96.64 & 6625.48 & 0.70 & 96.82 & 94.30 \\
70 & 6620.91 & 6622.27 & 10.40 & 6622.31 & 0.38 & -1.46 & 96.00 & 6621.25 & 0.39 & 71.44 & 95.89 & 6620.96 & 0.62 & 96.20 & 93.38 \\
80 & 6616.00 & 6617.65 & 10.61 & 6617.65 & 0.37 & 0.00 & 96.31 & 6616.38 & 0.39 & 75.25 & 96.21 & 6616.05 & 0.59 & 96.82 & 94.15 \\
90 & 6615.10 & 6616.58 & 12.74 & 6616.54 & 0.39 & 2.85 & 96.63 & 6615.49 & 0.38 & 72.34 & 96.66 & 6615.14 & 0.65 & 96.66 & 94.40 \\
100 & 6609.30 & 6610.72 & 12.88 & 6610.68 & 0.39 & 2.82 & 96.69 & 6609.68 & 0.37 & 72.74 & 96.78 & 6609.35 & 0.69 & 96.40 & 94.10 \\
\bottomrule
\end{tabular}
\end{table*}

Firstly, we analyze the KM-M algorithm in the match stage based on the results of FIMP-LS. According to the ``$R_{Opt}$'' column results, the solutions of FIMP-LS are essentially consistent with MP$_\text{LS}$, within a reasonable range of difference. The differences are due to the non-uniqueness of the optimal solutions in the match stage, so the matching schemes obtained by the KM-M and KM algorithms might differ. After processing by the same algorithm in the partition stage, the final solutions also present differences. These results are as expected and validate the correctness of the KM-M algorithm. 
Then, based on the results in the ``$T_{Opt}$'' column, it can be confirmed that the KM-M algorithm significantly improves the runtime efficiency. As $n$ increases, FIMP-LS optimizes the runtime of MP$_\text{LS}$ by 89.69\% to 95.56\%, effectively accelerating it by 10 to 20 times. The reason is that the time bottleneck of MP$_\text{LS}$ lies in the match phase, and the KM-M algorithm reduces the theoretical complexity of this phase from $O(n^3)$ to $O(n^2)$. Therefore, the improvement in runtime efficiency will increase as $n$ increases.

By analyzing the results in the ``$R_{Opt}$'' column of FIMP-MLS-GD, we can analyze the optimization effect of the MLS algorithm. When only the local search in the partition stage is replaced, the solution approaches the optimal solution by 21.92\% to 82.09\% as the $n$ increases, remarkably enhancing the solution quality compared to MP$_\text{LS}$. As $n$ increases, the complexity of the problem escalates, leading to more substantial optimization potential for solutions obtained by MP$_\text{LS}$. The result demonstrates that the MLS algorithm can explore the solution space more extensively. According to the ``$T_{Opt}$'' column, FIMP-MLS-GD shows some improvement in runtime compared to FIMP-LS. This improvement is due to its ability to find high-quality solutions more quickly, thereby reducing the number of iterations.

FIMP-HGA-GD further optimizes the partition stage through the hybrid genetic algorithm with elite strategy and the modification rules of the bipartite graph. According to the ``$R_{Opt}$'' column, the results show an optimization of 73.18\% to 88.04\% as $n$ increases, which is a significantly better optimization effect compared to FIMP-MLS-GD. In small-scale instances, with limited combinations of high-quality solutions, finding them within the solution space becomes more challenging. This emphasizes the necessity for algorithms to have a wider search range and greater diversity. Therefore, the smaller the scale of the instances, the more effective the optimization effect of FIMP-HGA-GD. Based on the "$T_{Opt}$" column, we observe an increase in runtime, attributable primarily to two factors. One is that HGA requires more search time. The other is that the number of iterations increases due to more frequent improvements of the solution during the iterative process.

Finally, by comparing the results of FIMP-MLS-GD, FIMP-MLS, FIMP-HGA-GD, and FIMP-HGA, we analyze the effectiveness of using the KK algorithm to generate initial solutions. Analysis of the ``$R_{Opt}$'' columns indicates that this initial solution algorithm significantly enhances the quality of final solutions. The KK algorithm generates more balanced initial partitioning schemes, enabling subsequent local searches to attain high-quality solutions with simple adjustments. Compared to FIMP-MLS-GD, FIMP-MLS significantly enhances the quality of the final solution, especially for instances with smaller $n$ values. For FIMP-HGA, it raises the optimization extent to over 90\% in all instances. Therefore, for the partition stage, high-quality initial solutions are crucial as they make the exploration in the local search phase more efficient and valuable.

\begin{table*}[t]
\caption{Improvement of all algorithms over MP$_\text{LS}$ on Benchmark-Con.}
\label{tbl:con}
\centering
\sisetup{
  mode=text, 
  detect-all
}
\begin{tabular}{@{}cSSSSSSSSSS@{}}
\toprule
\multirow{2}{*}{$Con$} & \multicolumn{2}{c}{FIMP-LS} & \multicolumn{2}{c}{FIMP-MLS-GD} & \multicolumn{2}{c}{FIMP-MLS} & \multicolumn{2}{c}{FIMP-HGA-GD} & \multicolumn{2}{c}{FIMP-HGA} \\
\cmidrule(r){2-3} \cmidrule(r){4-5} \cmidrule(r){6-7} \cmidrule(r){8-9} \cmidrule(r){10-11}
& {$R_{Opt}$} & {$T_{Opt}$} & {$R_{Opt}$} & {$T_{Opt}$} & {$R_{Opt}$} & {$T_{Opt}$} & {$R_{Opt}$} & {$T_{Opt}$} & {$R_{Opt}$} & {$T_{Opt}$} \\
\midrule
0  & 0.00 & 92.41 & 86.73 & 95.09 & 87.83 & 94.95 & 93.91 & 75.48 & 96.34 & 82.93 \\
10  & 0.00 & 95.08 & 72.08 & 95.34 & 77.30 & 95.33 & 88.81 & 92.97 & 97.30 & 94.32 \\
20  & 2.40 & 95.06 & 73.56 & 95.20 & 77.25 & 95.13 & 87.14 & 92.53 & 96.70 & 94.27 \\
30  & 0.00 & 95.51 & 64.02 & 95.71 & 67.49 & 95.52 & 82.70 & 93.14 & 95.68 & 94.07 \\
40  & 0.00 & 96.20 & 60.75 & 96.21 & 62.04 & 96.15 & 76.93 & 93.74 & 95.72 & 94.01 \\
50  & 0.00 & 95.67 & 63.05 & 95.58 & 68.83 & 95.57 & 80.31 & 92.20 & 96.97 & 93.22 \\
60  & 3.28 & 95.91 & 62.91 & 95.81 & 63.26 & 95.81 & 78.01 & 92.80 & 96.21 & 93.49 \\
70  & -4.32 & 95.89 & 64.35 & 95.84 & 70.07 & 95.79 & 76.09 & 92.23 & 96.36 & 93.09 \\
80  & -0.55 & 96.37 & 56.14 & 96.39 & 66.66 & 96.29 & 73.51 & 93.41 & 96.59 & 93.62 \\
90  & -5.14 & 96.72 & 49.97 & 96.81 & 53.48 & 96.65 & 68.86 & 93.63 & 95.18 & 93.50 \\
100  & 5.62 & 97.03 & 39.39 & 97.06 & 38.94 & 96.88 & 61.74 & 93.70 & 87.66 & 93.47 \\
\bottomrule
\end{tabular}
\end{table*}

\begin{table*}[t]
\caption{Improvement of four algorithms over MP$_\text{LS}$ on Benchmark-M and their concrete objective value.}
\label{tbl:m}
\centering
\sisetup{
  mode=text, 
  detect-all
}
\begin{tabular}{@{}ccS[table-format=5.2]SSSSSSSSS@{}}
\toprule
\multirow{2}{*}{$m$} & \multirow{2}{*}{\text{LB}} & \multicolumn{2}{c}{MP$_\text{LS}$} & \multicolumn{2}{c}{FIMP-MLS-GD} & \multicolumn{2}{c}{FIMP-MLS} & \multicolumn{2}{c}{FIMP-HGA-GD} & \multicolumn{2}{c}{FIMP-HGA} \\
\cmidrule(r){3-4} \cmidrule(r){5-6} \cmidrule(r){7-8} \cmidrule(r){9-10} \cmidrule(r){11-12}
& & \text{OV} & \text{Diff} & $V_{Opt}$ & $R_{Opt}$ & $V_{Opt}$ & $R_{Opt}$ & $V_{Opt}$ & $R_{Opt}$ & $V_{Opt}$ & $R_{Opt}$ \\
\midrule
2 & 65770.00 & 65770.10 & 0.10 & 0.00 & 0.00 & 0.10 & 100.00 & 0.10 & 100.00 & 0.10 & 100.00 \\
7 & 18919.94 & 18920.16 & 0.22 & 0.10 & 31.67 & 0.16 & 53.33 & 0.17 & 71.11 & 0.22 & 100.00 \\
12 & 11042.52 & 11043.04 & 0.52 & 0.21 & 36.06 & 0.41 & 76.48 & 0.34 & 63.68 & 0.52 & 100.00 \\
17 & ~7804.26 & 7805.18 & 0.92 & 0.31 & 28.94 & 0.53 & 55.65 & 0.56 & 58.08 & 0.90 & 97.17 \\
22 & ~6026.37 & 6028.09 & 1.72 & 0.99 & 57.25 & 1.15 & 66.57 & 1.23 & 71.33 & 1.62 & 94.52 \\
27 & ~4903.17 & 4906.37 & 3.20 & 2.35 & 73.33 & 2.40 & 74.46 & 2.61 & 81.18 & 2.95 & 92.02 \\
32 & ~4132.20 & 4136.70 & 4.49 & 3.52 & 77.98 & 3.61 & 79.56 & 3.75 & 82.98 & 4.09 & 90.92 \\
\bottomrule
\end{tabular} 
\end{table*}

\begin{figure}[t]
\centering
\includegraphics[width=\linewidth]{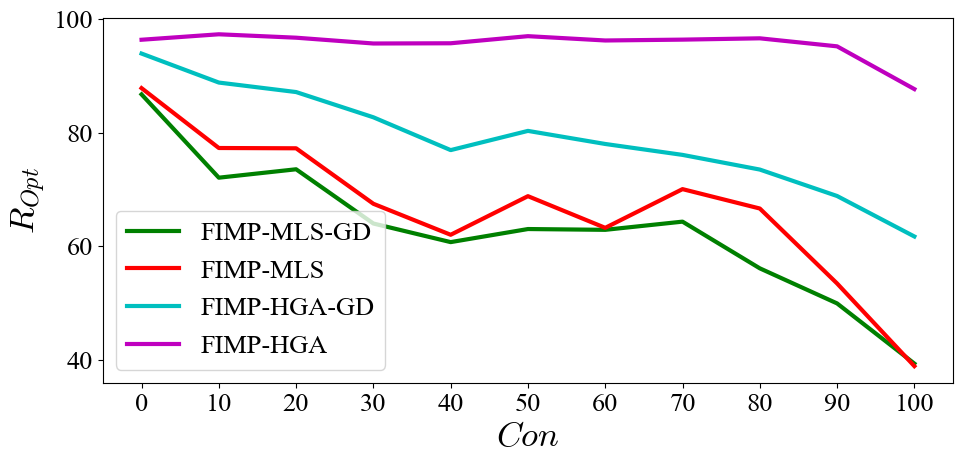}
\caption{Trend of $R_{Opt}$ for our proposed algorithms on Benchmark-Con with $Con = 0$ to $100$. Except for FIMP-HGA, the $R_{Opt}$ of other schemes obviously decreases as $Con$ increases.}
\label{fig:ConRes}
\end{figure}

\begin{figure}[t]
\centering
\begin{subfigure}{0.24\textwidth}
  \includegraphics[width=\linewidth]{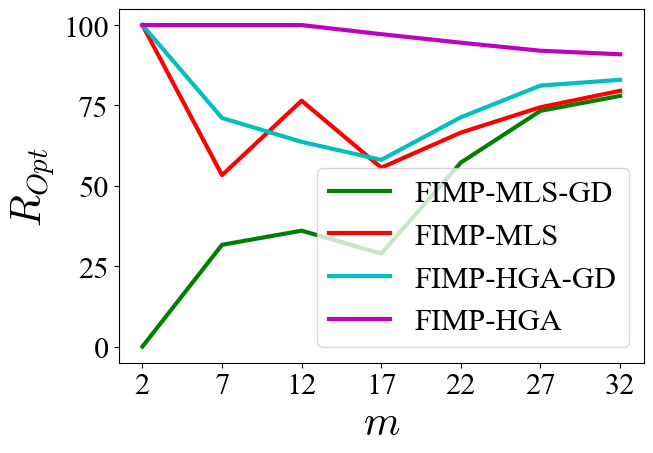}
  \caption{}
  \label{fig:MROpt}
\end{subfigure}\hfil
\begin{subfigure}{0.24\textwidth}
  \includegraphics[width=\linewidth]{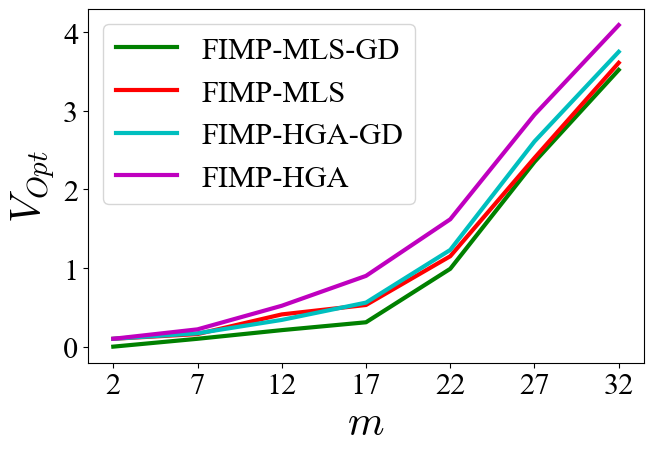}
  \caption{}
  \label{fig:MVOpt}
\end{subfigure}
\caption{Trend of $R_{Opt}$ (a) and $V_{Opt}$ (b) for our proposed algorithms on Benchmark-M with $m = 2$ to $32$. $R_{Opt}$ does not show a clear pattern with changes in $m$. $V_{Opt}$ increases as $m$ increases for all algorithms.}
\label{fig:M}
\end{figure}

\subsubsection{Evaluation based on Benchmark Features}

In \rtbl{den}, we present the performance of MP$_\text{LS}$, FIMP-LS, FIMP-MLS, and FIMP-HGA as the density of the bipartite graph changes. The average theoretical lower bound, objective value, and runtime are respectively listed in the ``LB'', ``OV'', and ``RT'' columns. 
Analysis of the ``OV'' and ``RT'' columns for MP$_\text{LS}$ shows that as the density of the bipartite graph increases, the objective value decreases while the runtime increases. This trend is due to the match stage achieving match schemes with lower total weights as the number of edges increases. Consequently, there are more solutions to explore in the solution space, necessitating additional search rounds and more runtimes.
According to the $R_{Opt}$ column, the improvement margin of FIMP-MLS and FIMP-HGA over MP$_\text{LS}$ remains consistent across various bipartite graph densities. This is because our proposed algorithms primarily enhance the solution quality during the partition stage. Although the outcomes of MP$_\text{LS}$ are already close to the theoretical lower bound, our proposed FIMP-HGA can still achieve at least a 96\% improvement.

In \rtbl{con}, we demonstrate how the performance of each algorithm varies with the changes in the consistency of the bipartite graph. \rfig{ConRes} organizes the improvement ratio of FIMP-LS-GD, FIMP-MLS, FIMP-HGA-GD, and FIMP-HGA compared to MP$_\text{LS}$. As the consistency of the bipartite graph increases, the improvement ratio of FIMP-LS-GD, FIMP-MLS, and FIMP-HGA-GD shows a decreasing trend. That is, the greater the consistency of the bipartite graph, the more difficult it is to improve upon solutions of MP$_\text{LS}$, and the further away from the theoretical lower bound it becomes. This confirms the description of consistency in \rsec{DS}. However, FIMP-HGA consistently achieves a significant improvement ratio across different consistency of bipartite graphs, obtaining solutions close to the theoretical lower bound. This proves that FIMP-HGA can accurately identify high-quality solution spaces and conduct thorough searches, fundamentally enhancing the solution quality for PMMWM.

In \rtbl{m}, we present the performance of FIMP-MLS-GD, FIMP-MLS, FIMP-HGA-GD, and FIMP-HGA as the number of partitions varies. For MP$_\text{LS}$ the ``Diff'' column represents the difference between the calculated objective function values (``OV'' column) and the theoretical lower bounds (``LB'' column).
According to the results in the "Diff" columns, as the number of partitions increases, the difference becomes larger, indicating greater room for improvement. This is due to the increased difficulty in evenly distributing weights across more partitions. 
By directly analyzing the ``$R_{Opt}$'' column, as shown in \rfig{MROpt}, we can see that the improvement ratio over MP$_\text{LS}$ is not a monotonic change. This is because $R_{Opt}$ is influenced not only by the optimization for the solution quality but also by the difficulty of reaching the theoretical lower bound, which increases with the number of partitions. By directly analyzing the actual improvement of the solution quality, namely the values in the $V_{Opt}$ column shown in \rfig{MVOpt}, it can be observed that the improvement in the objective value of each scheme over MP$_\text{LS}$ increases as the number of partitions increases.

\section{Conclusion}
\label{subsec:con}
In this study, we introduce FIMP-HGA as a solution to the PMMWM problem, employing the Match-Partition decomposition framework akin to MP$_\text{LS}$. Notable advancements are included in FIMP-HGA. For example, we introduce an exact algorithm during the match stage via incremental updates, which reduces time complexity per iteration from $O(n^3)$ to $O(n^2)$. The partition stage in FIMP-HGA features a Hybrid Genetic Algorithm with an elite strategy, complemented by our proposed Greedy Partition Crossover operator and Multilevel Local Search, significantly enhancing solution quality. Dynamic adjustments to the bipartite graph structure ensure a thorough exploration of the solution space. Our experiments, conducted on four proposed benchmarks encompassing 1750 instances, demonstrate the superiority of FIMP-HGA over the state-of-the-art method. FIMP-HGA achieves results closer to the theoretical optimum by at least 90\% compared to MP$_\text{LS}$ while reducing the computational time by 3-20 times. 

In future work, we plan to refine and optimize the hybrid genetic algorithm for the partition stage and explore its application to related problems such as Multiprocessor Scheduling and Multi-way Number Partition. Additionally, we aim to enhance the interaction between match and partition stages by further analyzing and experimenting with bipartite graph structure adjustments.

\bibliographystyle{IEEEtranN}
\bibliography{00-Bib}

\begin{IEEEbiography}
[{\includegraphics[width=1in,height=1.25in,clip,keepaspectratio]{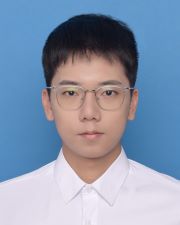}}]{Yuxuan Wang} 
is currently pursuing the Ph.D. degree with Huazhong University of Science and Technology, Wuhan, China. He received the B.S. degree in School of Computer Science and Technology from Huazhong University of Science and Technology, Wuhan, China, in 2022. His research interests include combinatorial optimization and artificial intelligence.
\end{IEEEbiography}

\begin{IEEEbiography}
[{\includegraphics[width=1in,height=1.25in,clip,keepaspectratio]{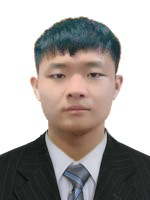}}]{Jiongzhi Zheng}
is currently pursuing the Ph.D. degree with Huazhong University of Science and Technology, Wuhan, China. He received the B.S. degree in material forming and its control engineering from Huazhong University of Science and Technology, Wuhan, China, in 2018. His research interests include combinatorial optimization and reinforcement learning.
\end{IEEEbiography}

\begin{IEEEbiography}
[{\includegraphics[width=1in,height=1.25in,clip,keepaspectratio]{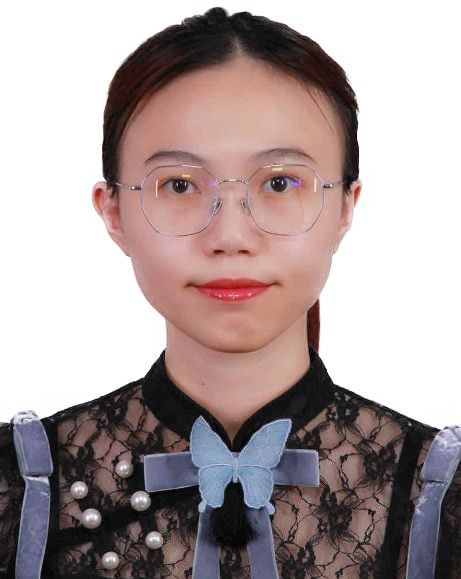}}]{Jinyao Xie} is currently working at Huawei Research Institute, Wuhan, China. She received the B.S. degree in school of computer science and technology from Huazhong University of Science and Technology, Wuhan, China, in 2021.

\end{IEEEbiography}

\begin{IEEEbiography}
[{\includegraphics[width=1in,height=1.25in,clip,keepaspectratio]{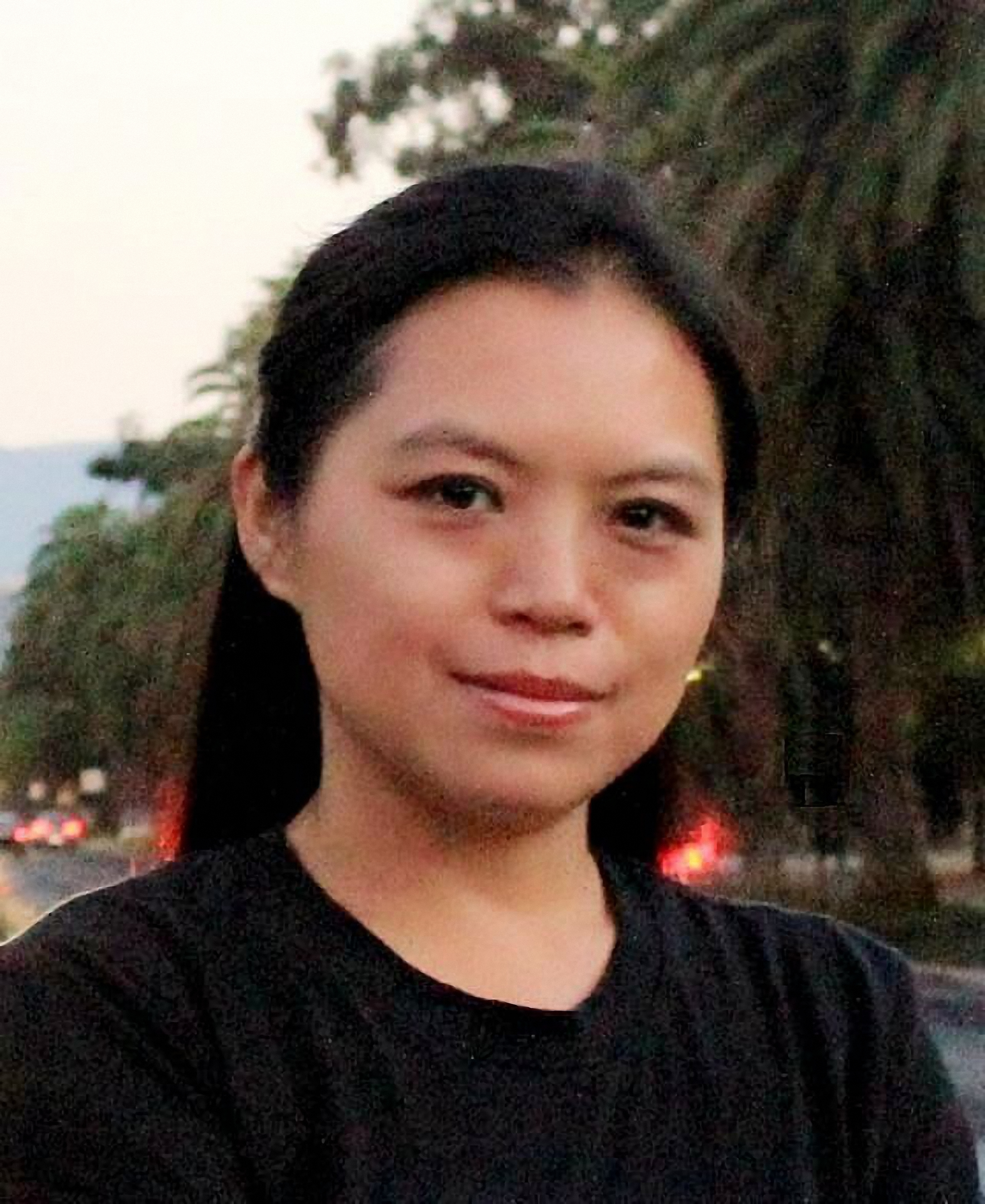}}]{Kun He}
(SM18) is currently a Professor in School of Computer Science and Technology, Huazhong University of Science and Technology, Wuhan, P.R. China. She received the Ph.D. degree in system engineering from Huazhong University of Science and Technology, Wuhan, China, in 2006. She had been with the Department of Management Science and Engineering at Stanford University in 2011-2012 as a visiting researcher. She had been with the department of Computer Science at Cornell University in 2013-2015 as a visiting associate professor, in 
2016 as a visiting professor, and in 2018 as a visiting professor. She was honored as a Mary Shepard B. Upson visiting professor for the 2016-2017 Academic year in Engineering, Cornell University, New York. Her research interests include adversarial machine learning, representation learning, graph data mining, and combinatorial optimization. 
\end{IEEEbiography}

\vfill

\end{document}